\documentclass[accepted]{uai2021} 

\usepackage[american]{babel}

    \bibliographystyle{plain}
\usepackage{mathtools} 
\usepackage{booktabs} 
\usepackage{tikz} 


\usepackage{amssymb}
\usepackage{bm}
\usepackage{amsmath,amsthm,dsfont}

\newcommand{\norm}[1]{\left\lVert#1\right\rVert}

\newcommand{\Lim}[1]{\raisebox{0.5ex}{\scalebox{0.8}{$\displaystyle \lim_{#1}\;$}}}

\newtheorem{lemma}{Lemma}
\newtheorem{theorem}{Theorem}
\newtheorem{remark}{Remark}

\newtheorem{conjecture}{Conjecture}
\newcommand{\comment}[1]{}

\title{Statistical Mechanical Analysis of Neural Network Pruning}

%
%

\author[1]{Rupam Acharyya}
\author[2]{Ankani Chattoraj\thanks{ equal contribution}}
\author[1]{Boyu Zhang$^*$}
\author[3]{Shouman Das}
\author[1]{Daniel \v{S}tefankovi\v{c}}
\affil[1]{%
    Computer Science Dept.\\
    University of Rochester\\
    Rochester, New York, USA
}
\affil[1]{%
    Brain anf Cognitive Science Dept.\\
    University of Rochester\\
    Rochester, New York, USA
}
\affil[1]{%
    Mathematics Dept.\\
    University of Rochester\\
    Rochester, New York, USA
}

\begin{document}
\maketitle

\begin{abstract}
Deep learning architectures with a huge number of parameters are often compressed using \textit{pruning} techniques to ensure computational efficiency of inference during deployment. Despite multitude of empirical advances, there is a lack of theoretical understanding of the effectiveness of different pruning methods. We inspect different pruning techniques under the statistical mechanics formulation of a teacher-student framework and derive their generalization error (GE) bounds. It has been shown that \textit{Determinantal Point Process} (DPP) based \textit{node} pruning method is notably superior to competing approaches when tested on real datasets. Using GE bounds in the aforementioned setup we provide theoretical guarantees for their empirical observations. Another consistent finding in literature is that sparse neural networks (\textit{edge pruned}) generalize better than dense neural networks (\textit{node pruned}) for a fixed number of parameters. We use our theoretical setup to prove this finding and show that even the baseline \textit{random edge pruning} method performs better than the \textit{DPP node pruning} method. We also validate this empirically on real datasets.
\end{abstract}

\section{Introduction}
Deep neural networks have achieved impressive results in a wide variety of applications such as classification \cite{krizhevsky2012imagenet,liu2017survey}, image processing \cite{litjens2017survey, badrinarayanan2017segnet}, natural language processing \cite{devlin2018bert,deng2018deep,socher2013recursive}, etc. Most of these networks use millions and sometimes even billions of parameters which makes inference computationally expensive and memory intensive \cite{devlin2018bert}. To address this, researchers explore pruning techniques with the primary goal of comparing performance on real datasets. The broad scientific paradigm explored by most pruning techniques is to empirically and heuristically determine either how to prune a network or what to prune in a network (sometimes both). In this work, we take a step towards theoretical understanding of these two prime aspects of pruning methods. 

We compare the quality of different pruning methods for feedforward neural networks under the \textit{teacher-student} framework \cite{saad1995exact,saad1995line,saad1997learning,goldt2019dynamics} in the thermodynamic limit (input dimension goes to infinity) using \textit{generalization error bounds} (GE), a theoretical measure of performance of machine learning models on unseen test data \cite{vapnik1999overview}. 

A fairly recent work by \cite{mariet2015diversity} empirically investigates a node pruning technique where a diverse subset of nodes are preserved in a given layer using Determinantal Point Process (DPP) \cite{macchi1975coincidence,kulesza2012determinantal}. We provide theoretical guarantees for their empirical observations thereby showing that DPP based node pruning outperforms two standard paradigms of pruning (magnitude based node pruning and random node pruning). Thus, in the first part of this paper, we take a step towards theoretical understanding of the question: how  to prune? 

For the second part of this work we focus our attention to the study by \cite{blalock2020state}. This study reviewed multiple papers across decade on various pruning methods and closely analyzed their empirical results to conclude that sparse models obtained after edge/connection (used interchangeably) pruning outperforms dense ones obtained after node pruning for a fixed number of parameters. We extend our theoretical setup and compare node and edge pruning techniques which are within the scope of our investigation, to provide a theoretical justification of their empirical observation driven claim, thereby addressing the question: what to prune? 

Our work has multiple contributions with regard to theoretical advancements in the domain of pruning: 
\begin{itemize}
    \item We use GE bounds on the teacher-student framework to compare different pruning methods within a class, which to the best of our knowledge, is the first theoretical advance in comparing pruning methods.
    \item We prove that DPP node pruning outperforms random and importance node pruning methods, previously shown by \cite{mariet2015diversity} empirically.
    \item We also theoretically show and validate on real datasets (\texttt{MNIST} and \texttt{CIFAR10}), that baseline random edge pruning performs better than DPP node pruning (superior in the node pruning regime explored in this paper) which is consistent with empirical observations from pruning literature that sparse models outperform dense models \cite{blalock2020state}.
\end{itemize}

\section{Related Work}
\textbf{Pruning Methods:} Studies under node pruning regime remove entire neurons/nodes (used interchangeably henceforth) keeping the networks dense \cite{he2014reshaping,li2016pruning,he2017channel}. Our work is closely related to \cite{mariet2015diversity}, where a DPP sampling technique is used to select a set of diverse neurons/nodes to be preserved during pruning. The authors also introduce a \textit{reweighting} procedure to compensate contributions of the pruned neurons in the network. Finally, they compare \textsc{DIVNET} (DPP node pruning with reweighting as in \cite{mariet2015diversity}) with random and importance node pruning \cite{he2014reshaping} on real datasets. Seminal studies on edge pruning \cite{lecun1990optimal,hassibi1993second} remove unimportant network weights based on the Hessian of the network's error function. Among others, alternative approaches include low rank matrix factorization of the final weight layers \cite{sainath2013low} or pruning the unimportant connections below a threshold \cite{han2015deep}. Though dense networks can benefit from modern hardware, sparse models outperform dense ones for a fixed number of parameters across domains \cite{lee2019signal,kalchbrenner2018efficient,gray2017gpu}. In a recent review this is highlighted based on observations from investigating $81$ studies on pruning techniques \cite{blalock2020state}.  

The various existing methods can be broadly subsumed into a couple of categories \cite{blalock2020state}. These categories are mainly governed by the principles of pruning heuristics. First category is the magnitude-based approaches which have been extensively studied both globally and layerwise \cite{han2015deep,gale2019state}. As per \cite{blalock2020state}, magnitude-based approaches are not only good and common baselines in the literature but they also give comparable performance to other methods such as the gradient-based methods \cite{lee2019signal,yu2018nisp}. Another category is the random pruning which serves as an useful baseline for showing superior performance of any other pruning technique. We hence show all our theoretical results w.r.t these two categories, random pruning and importance pruning (same in concept as magnitude based pruning). We do not focus on any specific algorithm within these categories but explore the general concept for theoretical results. There are recent advances in pruning techniques which are complementary to these approaches, such as, being data independent \cite{ben2020data, tanaka2020pruning}, single shot \cite{lee2018snip, van2020single} etc. However, these are beyond the scope of our investigation.

\textbf{Theoretical Advances Towards Understanding Neural Networks:} Despite promising performance in empirical data, providing theoretical guarantees for neural networks remains a known challenge. Researchers have explained the training dynamics of neural network from the information theoretic perspective \cite{tishby2015deep, saxe2019information}. In another direction of work the learning dynamics of neural networks with infinitely wide hidden layers are explored \cite{jacot2018neural,du2018gradient,arora2019exact,zou2020gradient}. Pioneering work by \cite{saad1995exact,saad1995line,saad1997learning} analyzes the generalization dynamics form the statistical mechanics perspective on \textit{teacher-student} framework~\cite{gardner1989three} to understand the performance of neural networks on unseen test data. All our theoretical analyses throughout this work closely follow \cite{advani2017high, goldt2019dynamics}, who analyzed results for the case where the student networks are over parameterized, i.e., it has more number of hidden nodes than the teacher network.

\section{Preliminaries}\label{prelims}
\textbf{Determinantal Point Process (DPP):} DPP~\cite{macchi1975coincidence} is a probability distribution over power set of a ground set $\mathcal{G}$, here finite. DPP is a special case of negatively associated distributions \cite{joag1983negative} which assigns higher probability mass on diverse subsets. Formally, a DPP with a marginal kernel $L$ ($\in \mathbb{R}^{|\mathcal{G}| \times |\mathcal{G}|}$) is:
$\mathbb{P}[\mathbf{Y}=Y] = \frac{\det (L_Y)}{\det (L+I)}$, where $Y \subseteq \mathcal{G}$ and $L_Y$ is the principal submatrix defined by the indices of $Y$. We use $k$-DPP to denote the probability distribution over subsets of fixed size $k$. 

\textbf{DPP Node Pruning:} \cite{mariet2015diversity} uses DPP to propose a novel node pruning method for feedforward neural network. They define information at node $i$ of layer $l$ as $\bm{a}_i^l (=(a_{i1}^l,\ldots,a_{in}^l))$, where $a_{ij}^l$ is the activity of node $i$ of layer $l$ on $j^{th}$ input. Here $\bm{a}_i^{l} = g\left( \bm{b}_i^l \right)$, where $ \bm{b}_i^l = \sum_{j=1}^{n_{l-1}} w_{ji}^{l-1} \bm{a}_j^{l-1}$ is the information at node $i$ of layer $l$ before activation. A layer is pruned by choosing a subset of hidden nodes using a DPP kernel: $\bm{L}$ ($ =  \bm{L'} +\epsilon \bm{I}$), where, $\bm{L'}_{st} = \exp( -\beta \norm{  \bm{a}_s^l -  \bm{a}_t^l }^2 )$ and $\beta$ is a bandwidth parameter. The matrix $\bm{L}$ is of dimension $n_l \times n_l$, as total number of nodes in layer $l$ is $n_l$. By the property of DPP, this procedure will keep a diverse subset of nodes for each layer w.r.t. information obtained from the training data. A \textit{reweighting} technique (see Section 2.2 of \cite{mariet2015diversity}) is then applied to outgoing edges of retained nodes to compensate for information lost in that layer due to node removal. 

\textbf{Remark:} \textsc{DIVNET} denotes DPP node pruning with reweighting as in \cite{mariet2015diversity}.

\begin{table*}[t]
  \caption{Notations used in Theorems}
  \begin{center}
  \small
\scalebox{0.9}{
  \begin{tabular}{|c|c|c|c|c|c|c|}
    \hline
    \textbf{Notations}& \textbf{Explanations} & \textbf{Notations} & \textbf{Explanations} & \textbf{Notations} & \textbf{Explanations}  \\
    \hline
     $n$ & number of inputs &  $N$ & dimension of the input &   $n_l$ & number of nodes in layer $l$\\
    \hline
    $v_{i}^l$ &  $i^{th}$ node in layer $l$ $(1\leq i \leq n_l)$ &
    $a_{ij}^l$ & activation of $v^{l}_i$ on $j^{th}$ input & $M$ &  number of teacher\\
    &&&&& hidden nodes\\
    \hline 
    $e_{ij}^l$ & edge from  $v_{i}^l$ to $v_{j}^{l+1}$    & $w_{ij}^l$ & weight of  $e_{ij}^l$ & $K$ & number of student\\
     &  $(1\leq i \leq n_l$ and $1\leq j \leq n_{l+1})$ & & $(1\leq i \leq n_l$ and $1\leq j \leq n_{l+1})$ & & hidden nodes \\
    \hline 
    $k_n$ &  number of student hidden nodes &  $k_e$
     &  number of incoming edges of a  & $v^*$ & second layer weight\\
     & kept after node pruning &  & hidden node kept after edge pruning  & & of teacher network\\
     \hline
\end{tabular}}
\end{center}
\label{tab:notations}
\end{table*}

\textbf{Online Learning in Teacher-Student Setup~\cite{goldt2019dynamics}:}\label{teach-stud} We use a two-layer perceptron which has $N$ input units, $M$ hidden units and 1 output unit as the \textit{teacher network} to generate labels for i.i.d Gaussian input, $\bm{x}^t= (x_1^t,\ldots,x_N^t)$ where $x_i^t \sim \mathcal{N}(0,1)$ $\forall i \in \{1,\ldots,N\}$. Let $\theta^* = \{\bm{w}^*(\in \mathbb{R}^{M \times N}), v^* \in \mathbb{R}^M\}$ denote the fixed parameters of the teacher network. The label $y^t$ of the input $\bm{x}^t$ ($t = 1,2,\ldots$) is given as, 
\begin{equation}
\label{label-teacher}
y^t = \sum_{m=1}^M v^*_m g\left(\frac{w^*_m \bm{x}^t}{\sqrt{N}}\right) + \sigma \zeta^t,
\end{equation}
where $\zeta^t \sim \mathcal{N}(0,1)$ is the output noise, and $g$ is the sigmoid activation function. The input and teacher generated labels ($\{(\bm{x}^1,y^1),\ldots\}$) are used to train a two-layer \textit{student network} with $N$ input units, $K$ hidden units ($K \geq M$) and 1 output unit using online SGD learning method. We consider the quadratic training loss, i.e., 
\begin{equation}\label{MSE-loss}
L(f) = \frac{1}{2} \left[ \sum_{k=1}^K v_k g\left(\frac{w_k \bm{x}^t}{\sqrt{N}}\right)  - y^t \right]^2,
\end{equation}
where $f = \{\bm{w},\bm{v}\}$ denotes the parameter of the student network. One of the key quantities for evaluating performance of neural network is \textit{generalization error} (GE). For the teacher student setup the GE with teacher network $f^*$ and student network $f$ is denoted as $\epsilon(f,f^*)$. It is defined as,
$$\epsilon(f,f^*) = \frac{1}{2}\langle [\phi(x,f) - \phi(x,f^*)]^2  \rangle,$$
where $\phi(x,f) = \sum_{k=1}^K v_k g\left(\frac{w_k \bm{x}^t}{\sqrt{N}}\right)$ and $\langle \cdot \rangle$ denotes average over input data distribution. In the teacher student setup the weight of the teacher network ($f^*$) is fixed beforehand. Hence, from now onward we will denote the GE as a function of the student network, i.e., as $\epsilon(f)$.  ~\cite{goldt2019dynamics} showed that GE $\epsilon(f)$ (expected error on the unseen data, for details see S31 of \cite{goldt2019dynamics}) for the student network is a function of the following \textit{order parameters},
\begin{equation}\label{order-parameters}
Q_{ik} = \frac{w_i^T w_k}{N}, \;\;\; R_{in} = \frac{w_i^T w_n^*}{N},\;\;\; R_{mn} = \frac{w_m^{*T} w_n^*}{N}.
\end{equation}
Intuitively, these order parameters measure the similarities between and within the hidden nodes of teacher and student networks. Our theoretical results assume~\cite{goldt2019dynamics}:
\begin{enumerate}\label{assumptions}
\setlength{\itemsep}{0.0pt}
    \item[(A1)] If $\bm{x}= (x_1,\ldots,x_N)$ is an input then $x_i \in \mathcal{N}(0,1).$ Also, $N \rightarrow \infty.$
    \item[(A2)] Both the teacher and the student networks have only one hidden layer.
    \item[(A3)] $K\geq M$ and $K= Z \cdot M$ where $Z \in \mathbb{Z}^{+}.$
    \item[(A4)] The activation in the hidden layer is sigmoidal for both teacher and student network.
    \item[(A5)] The output $\in$ $\mathbb{R}$ (i.e., regression problem).
    \item[(A6)] The order parameters (see section \ref{teach-stud}) satisfy the ansatz as in (S58) - (S60) of~\cite{goldt2019dynamics}. This ansatz intuitively states that the every student hidden node specializes in learning a specific teacher hidden node and for each teacher hidden node there is a student hidden node which learns that teacher node.
    \item[(A7)] No noise is added to the labels generated by the teacher network, i.e., $\sigma = 0$ in \eqref{label-teacher}.
\end{enumerate}

\section{GE of Pruned Network in Teacher-Student Setup}\label{ge-teacher-student}
We compare the performance of student networks pruned using different techniques as in Table \ref{tab:prune-methods} by analyzing their GE (see Figure \ref{fig:intuition}). For node and edge pruning comparison, we choose the parameters $k_n$ and $k_e$ (see Table \ref{tab:notations}) such that the total number of parameters of the networks remain same, i.e., they satisfy,
\begin{equation}\label{node-edge-relation}
   \frac{k_n}{K} = \Lim{N \rightarrow \infty}\frac{k_e}{N} = c,
\end{equation}
where $c \in [0,1]$ is a constant. It is important to note that since we assume that the number of student nodes is more than the number of teacher nodes, which means multiple student nodes learn the same teacher node (see Figure 3 of~\cite{goldt2019dynamics}; also in Figure \ref{fig:intuition}: two student hidden nodes learn one teacher hidden node, shown in same color). From~\cite{goldt2019dynamics}, we know that, in noiseless case ($\sigma = 0$ in \eqref{label-teacher}), the student network learns the teacher network completely when trained till convergence, i.e., the GE becomes $0$. When we prune the student network, this GE increases, which we then analyze for different types of pruning under certain assumptions (see Section \ref{assumptions} (A1)-(A7)).
\begin{table*}[t]
  \caption{Different pruning methods and notations for their GE. Here $f$ denotes the pruned student network. u.a.r. and w.p. stand for \textit{uniformly at random} and \textit{with probability} resepectively.}
  \begin{center}
  \small
\scalebox{1}{
  \begin{tabular}{|c|c|c|c|c|}
    \hline
    \textbf{Pruning Method} & \textbf{Procedure} &\textbf{Retained} &\textbf{GE without} & \textbf{GE with}\\
    & & \textbf{Parameters} &\textbf{reweighting} & \textbf{reweighting}\\
    \hline
    Random Node & Keep $k_n$ nodes u.a.r. & $k_n$ hidden nodes & $\epsilon_{k_n}^{Rand\: Node}(f)$ & $\hat{\epsilon}_{k_n}^{Rand\: Node}(f)$\\
    \hline
    Importance Node & \cite{he2014reshaping} & $k_n$ hidden nodes & $\epsilon_{k_n}^{Imp\: Node}(f)$ & $\hat{\epsilon}_{k_n}^{Imp\: Node}(f)$\\
    \hline
    DPP Node & see Section \ref{prelims} & $k_n$ hidden nodes & $\epsilon_{k_n}^{DPP\: Node}(f)$ & $\hat{\epsilon}_{k_n}^{DPP\: Node}(f)$\\
    \hline
    Random Edge & Keep an edge w.p. $c$ & $k_e$ incoming edges & $\epsilon_{k_e}^{Rand\: Edge}(f)$ & $\hat{\epsilon}_{k_e}^{Rand\: Edge}(f)$\\
     & for each hidden node & per hidden node & & \\ 
    \hline
\end{tabular}}
\end{center}
\label{tab:prune-methods}
\end{table*}

\subsection{Comparing Node Pruning Methods}
We theoretically show that the increment in GE due to \textsc{DIVNET} is less than that for random and importance node pruning methods, justifying the empirical findings of~\cite{mariet2015diversity}. The proof proceeds with the following steps: (1) Theorem \ref{dpp-np-error} provides a closed form expression of the GE after DPP node pruning. (2) Theorem \ref{dpp-betterthan-rand} shows that: (a) GE of random node pruning is greater than GE of DPP node pruning (b) GE of random node pruning with reweighting is greater than GE of \textsc{DIVNET} (c) GE of importance node pruning is greater than GE of \textsc{DIVNET}.

\begin{figure*}[t]
    \centering
    \includegraphics[width=\textwidth]{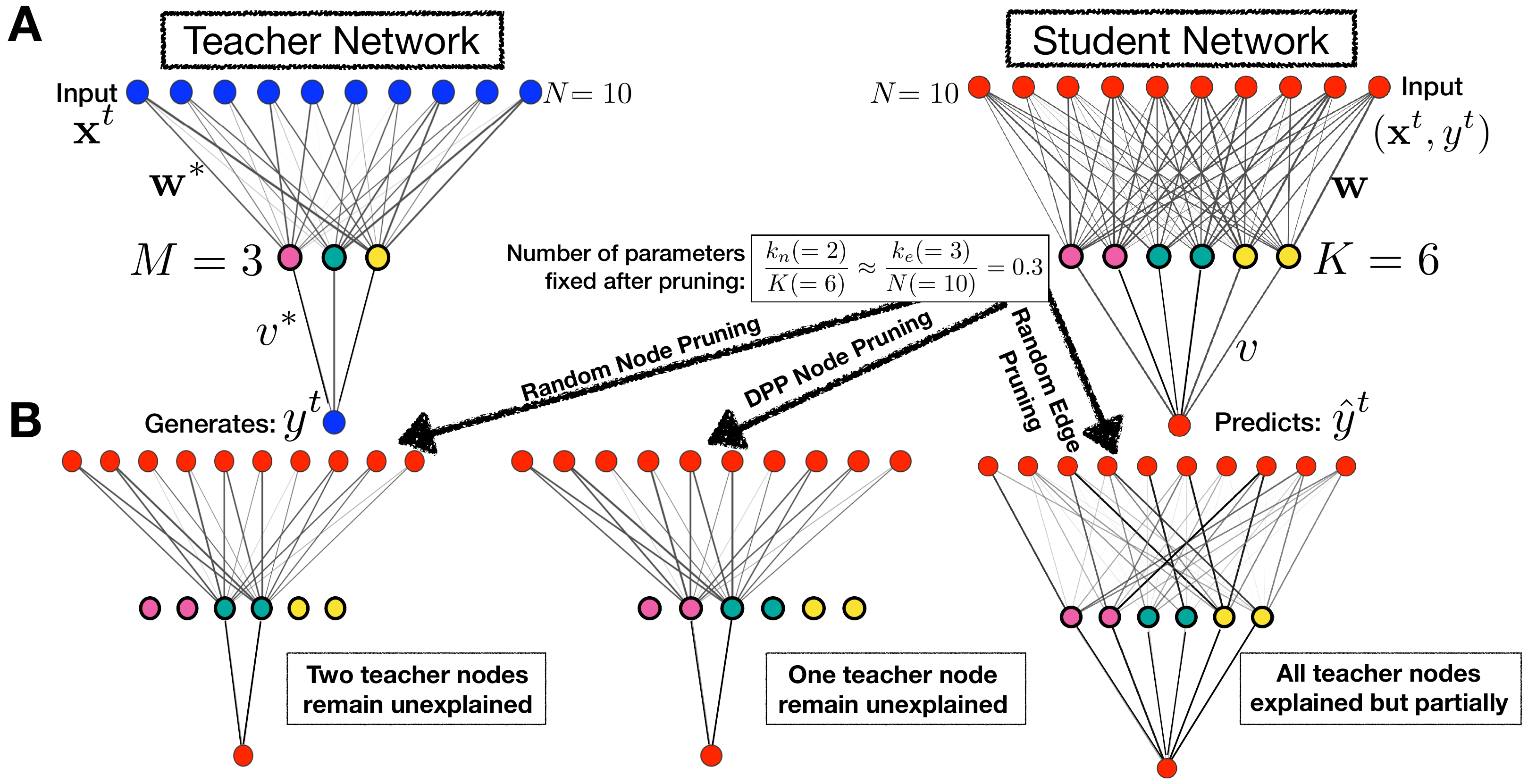}
    \caption{\textbf{(A)} Two layer teacher-student framework: A teacher neural network with 3 hidden nodes (left) and a student network with 6 hidden nodes (right). Input data (i.i.d) along with its label generated by teacher network are fed to student network to predict. \textbf{(B)} Intuitive example for 3 types of pruning on student network. For $k_n=2$, random node pruning might only be able to explain 1 teacher hidden node, whereas DPP node pruning will always retain (partial) information about 2 teacher hidden nodes, hence preforms better. Random edge pruning retains sparse information about all 3 teacher nodes which is enough to outperform DPP node pruning. All notations follow Table \ref{tab:notations}.}
    \label{fig:intuition}
\end{figure*}

\begin{theorem}\label{dpp-np-error}
Assume $(A1)-(A7).$  Let $k_n\leq M$ nodes are selected by the DPP  Node pruning method, 
\begin{equation}\label{dpp-np-closed-form}
\epsilon_{k_n}^{DPPNode}(f) = (v^*)^2 \left [ \frac{k_n}{6}\left( 1 - \frac{1}{Z}\right)^2 + \frac{M-k_n}{6} \right ]
\end{equation}
and
\begin{equation}\label{rewt-dppnp}
\hat{\epsilon}_{k_n}^{DPPNode}(f) = (M-k_n) \times \frac{(v^*)^2}{6}.
\end{equation}
\end{theorem}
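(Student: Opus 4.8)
The plan is to specialize the closed-form generalization error that Goldt et al. derive for the two-layer erf network to the specific weight configuration dictated by the ansatz (A6), and then evaluate it on the subnetwork that DPP pruning retains. First I would expand
$\epsilon(f)=\tfrac12\langle(\phi(x,f)-\phi(x,f^*))^2\rangle$
into student--student, student--teacher, and teacher--teacher terms, so that $\epsilon(f)$ becomes a quadratic form in the second-layer weights whose coefficients are the Gaussian averages $\langle g(\mu_n)g(\mu_m)\rangle$. For the erf-type sigmoid $g$ assumed in (A4), each such average is the exact arcsine integral $\frac{2}{\pi}\arcsin\!\big(R_{nm}/\sqrt{(1+R_{nn})(1+R_{mm})}\big)$, a function of the order parameters \eqref{order-parameters} only.

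Next I would substitute the convergent noiseless configuration described by (A6)--(A7): by \cite{goldt2019dynamics} the $K=ZM$ student nodes split into $M$ groups of $Z$, each node in group $n$ aligning with teacher direction $n$ and carrying second-layer weight $v^*/Z$, with orthonormal teacher directions so that $R_{nn}=1$ and $R_{nm}=0$ for $n\neq m$. This collapses every arcsine coefficient to one of two values: the aligned/diagonal overlap gives $\frac{2}{\pi}\arcsin\frac12=\frac13$, while every orthogonal overlap gives $0$, so only diagonal terms survive.

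I would then characterize what DPP node pruning keeps. Since all $Z$ nodes in a group have identical activations, the kernel entry $\bm{L'}_{st}=\exp(-\beta\norm{\bm a_s^l-\bm a_t^l}^2)$ equals $1$ for any same-group pair, making every principal submatrix that contains two same-group nodes singular; hence a $k$-DPP assigns zero probability to subsets with repeated groups, and for $k_n\le M$ it retains exactly one node from each of $k_n$ distinct groups. The pruned student therefore represents $k_n$ teacher directions with weight $v^*/Z$ (a factor $1/Z$ too small) and loses the remaining $M-k_n$ directions entirely. Substituting into the diagonal quadratic form, each retained direction contributes $\tfrac12\cdot\tfrac13\,(v^*(1-\tfrac1Z))^2$ and each lost direction contributes $\tfrac12\cdot\tfrac13\,(v^*)^2$, which after collecting the $\tfrac16$ prefactor yields \eqref{dpp-np-closed-form}. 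For \eqref{rewt-dppnp} I would invoke that DIVNET's reweighting rescales a retained node's outgoing weight to absorb its pruned siblings, i.e.\ from $v^*/Z$ back to $v^*$, so the $k_n$ retained directions match the teacher exactly and drop out of the error, leaving only the $M-k_n$ lost directions, each contributing $\frac{(v^*)^2}{6}$.

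I expect the main obstacle to be the DPP selection step: rigorously arguing that the diversity-seeking kernel forces one-node-per-group selection (via the vanishing-determinant, hence zero-probability, argument on identical activations) and that this holds precisely in the regime $k_n\le M$, together with pinning down the exact reweighting map so that the retained directions are recovered with no residual error. The arcsine evaluations and the final collection of terms are then routine.
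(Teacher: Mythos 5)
Your proposal is correct and follows essentially the same route as the paper's proof: the arcsine order-parameter expansion of the GE, the converged configuration in which each group of $Z$ student nodes aligns with one teacher direction and carries second-layer weight $v^*/Z$, the one-node-per-group DPP selection forced by the rank-one (all-ones) within-group kernel blocks, and reweighting restoring the retained groups so that only the $M-k_n$ unrepresented teacher nodes contribute $\frac{(v^*)^2}{6}$ each. The only cosmetic differences are that the paper routes the computation through an intermediate lemma giving the GE of an arbitrary node-pruned network with $l_m$ survivors per group (which it then reuses for the random/importance comparisons), and it establishes the block structure via the inner-product kernel shown to equal the order parameter $Q$, whereas you argue singularity of the RBF kernel on identical activations---both arguments rest on the same degeneracy and reach the same conclusion.
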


\textbf{Proof Idea of Theorem \ref{dpp-np-error}:} Proof of the above theorem (details in the appendix C) is based on two factors: (1) Results from \cite{goldt2019dynamics} assure that analyzing the \textit{order parameters} is enough to obtain closed form of GE. (2) We exploit the observation that the expected kernel of the DPP node pruning is same as the order parameter $Q$ (see appendix B for proof and Figure \ref{fig:simulation-results} E) which, following \cite{goldt2019dynamics}, is a block diagonal matrix with $M$ blocks. By property of DPP, the pruning method will retain a subset of student hidden nodes with at most 1 hidden node from each block when $k_n\leq M$ (see Figure \ref{fig:simulation-results} G). 

\begin{remark}\label{dpp-node-error}
As the expected DPP kernel is block-diagonal matrix, the stochasticity in subset selection via DPP does not impact GE when subset size is fixed and it only depends on size of pruned subsets.
\end{remark}

\begin{remark}
Our theorem uses $k_n \leq M$, however, in practice the kernel may have non-zero off-diagonal entries when the assumption (A1) about input data is violated. As a result the probability of sampling a subset of size $k_n > M$ may be nonzero.
\end{remark}

\textbf{Connection to Lottery Ticket Hypothesis: }
An interesting direction of research is to find small sub-networks from an overparameterized network with comparable performance. The existence of such networks is hypothesized in Lottery Ticket hypothesis \cite{frankle2018lottery}. Interestingly, recent work shows that pruning helps find such networks even without retraining \cite{ramanujan2020s,malach2020proving} and in our work we explore a sub-network in the teacher student setup.

Note that from Eq \eqref{rewt-dppnp}, when $M$ student nodes are kept after pruning, i.e., $k_n=M$, then the GE of the DPP node pruned network is $0$ which is GE of the original student network. Hence, from the fact that $K>M$ we can conclude that DPP node pruning can find out the winning ticket, i.e., a small sub-network with much less number of parameters than the original unpruned network but with same performance guarantee.  
\begin{theorem}\label{dpp-betterthan-rand}
Assume $(A1)-(A7).$ Then for $k_n\leq M$ we have,

\begin{equation}
\mathbb{E}_f\left[\epsilon_{k_n}^{Rand\:Node}(f)\right] > \epsilon_{k_n}^{DPP\:Node}(f')
\end{equation}
and
\begin{equation}
\mathbb{E}_f\left[\hat{\epsilon}_{k_n}^{Rand\:Node}(f)\right] > \hat{\epsilon}_{k_n}^{DPP\:Node}(f')
\end{equation}
and,
\begin{equation}
\epsilon_{k_n}^{Imp\:Node}(f') > \hat{\epsilon}_{k_n}^{DPP\:Node}(f'),
\end{equation}
i.e., DPP node pruning outperforms random node pruning in the above setup.
Here the expectation is taken over the the subsets of hidden nodes of size $k_n$ chosen u.a.r.
\end{theorem}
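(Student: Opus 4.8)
The plan is to reduce all three inequalities to a single \emph{per-block} expression for the generalization error and then dispatch each comparison by an elementary calculation. Following the order-parameter analysis of \cite{goldt2019dynamics} behind Theorem \ref{dpp-np-error}, together with the block structure established there --- the expected DPP kernel equals $Q$, which is block diagonal with $M$ blocks of size $Z = K/M$, one per teacher node --- I would first prove that the GE of \emph{any} node-pruned student depends only on the retention profile $(s_1,\dots,s_M)$, where $s_n$ counts the surviving student nodes in block $n$. Writing $c(S)$ for the number of blocks with at least one survivor, the claim is
\begin{equation}\label{per-block-noreweight}
\epsilon_{k_n}(f) = \frac{(v^*)^2}{6}\sum_{n=1}^{M}\left(1-\frac{s_n}{Z}\right)^2, \qquad \hat\epsilon_{k_n}(f) = \frac{(v^*)^2}{6}\bigl(M-c(S)\bigr),
\end{equation}
without and with reweighting. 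Specializing to the DPP profile ($s_n\in\{0,1\}$ with exactly $k_n$ ones, valid for $k_n\le M$) recovers both formulas of Theorem \ref{dpp-np-error}, a useful consistency check.

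Granting \eqref{per-block-noreweight}, the three bounds follow quickly. For random node pruning the profile is multivariate hypergeometric (choosing $k_n$ of the $K=ZM$ nodes), so by symmetry $\mathbb{E}_f[\epsilon_{k_n}^{Rand\:Node}] = \frac{(v^*)^2}{6}\,M\,\mathbb{E}[(1-s_1/Z)^2]$; expanding this with $\mathbb{E}[s_1]=k_n/M$ and $\mathrm{Var}(s_1)=\frac{k_n(M-1)(ZM-k_n)}{M^2(ZM-1)}$ and subtracting the DPP value of Theorem \ref{dpp-np-error}, the gap collapses to the single inequality $\mathrm{Var}(s_1) > \frac{k_n(M-k_n)}{M^2}$, which after clearing denominators rearranges to $M(Z-1)(k_n-1)>0$ --- strictly positive in the genuinely overparameterized regime $Z\ge 2$, $k_n\ge 2$. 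For the reweighted version \eqref{per-block-noreweight} gives $\mathbb{E}_f[\hat\epsilon_{k_n}^{Rand\:Node}] = \frac{(v^*)^2}{6}(M-\mathbb{E}[c(S)])$; DPP always covers exactly $k_n$ distinct blocks, whereas random selection covers $k_n$ blocks only when it incurs no collision, so $\mathbb{E}[c(S)] < k_n$ and the strict inequality is immediate. For importance pruning I would use a deterministic bound independent of the importance criterion: by convexity of $t\mapsto(1-t)^2$ (Jensen), every profile with $\sum_n s_n=k_n$ and $0\le s_n\le Z$ obeys $\sum_n(1-s_n/Z)^2 \ge M\bigl(1-\tfrac{k_n}{ZM}\bigr)^2$, and a one-line check gives $M(1-\tfrac{k_n}{ZM})^2 > M-k_n$ exactly when $Z\ge 2$; hence $\epsilon_{k_n}^{Imp\:Node}(f') > \hat\epsilon_{k_n}^{DPP\:Node}(f')$ no matter which $k_n$ nodes importance retains.

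The main obstacle is the first step: establishing \eqref{per-block-noreweight} in full generality. Theorem \ref{dpp-np-error} evaluates the GE only on the DPP-specific $\{0,1\}$ pattern, so I would have to revisit the order-parameter formula for $\epsilon(f)$ in \cite{goldt2019dynamics} and evaluate it on a block-diagonal $Q$ carrying an arbitrary number $s_n$ of mutually specialized survivors per block, checking that the second-layer weight $v^*$ and the reweighting correction act on each block independently so that the contributions add as in \eqref{per-block-noreweight}. Everything downstream --- the hypergeometric moments, the collision count, and the Jensen bound --- is routine once this formula is in hand. A minor point to flag is the boundary behaviour: all three gaps vanish at $Z=1$ (no overparameterization) or $k_n=1$, so the strict inequalities should be read in the overparameterized regime $Z\ge 2$, $k_n\ge 2$ that motivates the theorem.
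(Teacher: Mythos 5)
Your proposal is correct, and on two of the three inequalities it takes a genuinely different route from the paper. The per-block formula you flag as the main obstacle is precisely the paper's Lemma \ref{np-error} (the GE of any node-pruned network is $\tfrac{(v^*)^2}{6}\sum_m (1-l_m/Z)^2$, obtained by evaluating $f_1,f_2,f_3$ on the block-diagonal order parameters, with reweighting recovering every block that retains at least one survivor), so your reductions all go through once that lemma is granted. The first real difference is the random-node comparison: the paper never computes hypergeometric moments, but instead proves a realization-wise domination, showing for \emph{every} retained profile that $\sum_i\bigl[(l_i-1)+(1-l_i/Z)^2\bigr] \ge \sum_i l_i(1-1/Z)^2$ and hence that each random draw is at least as bad as DPP before taking any expectation; your moment calculation is weaker in that sense (it only controls the mean) but buys an exact gap, $\tfrac{(v^*)^2}{6}\cdot\tfrac{k_n(k_n-1)(Z-1)}{Z^2(ZM-1)}$, which makes the strictness condition transparent. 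The second difference is the importance comparison, where the paper's argument is heuristic: it invokes Lemma \ref{converge-second} and the ansatz $v_i=v_j$ within a group to claim all importance scores are approximately equal, so that importance pruning ``is almost the same as'' random pruning, and then reuses the first part. Your Jensen bound $\sum_n(1-s_n/Z)^2 \ge M\bigl(1-\tfrac{k_n}{ZM}\bigr)^2 > M-k_n$ (for $Z\ge 2$) is deterministic, rigorous, and applies to \emph{any} size-$k_n$ selection rule, so it is both cleaner and strictly more general than the paper's reduction. Finally, your boundary caveat is a genuine catch the paper misses: at $k_n=1$ or $Z=1$ all three gaps vanish, so the theorem's strict inequalities really require $k_n\ge 2$ and $Z\ge 2$; consistent with this, the paper's displayed ``equality'' $(l_i-1)+(1-l_i/Z)^2 = l_i\bigl[1+\tfrac{1}{Z^2}-\tfrac{2}{Z}\bigr]$ is in fact only an inequality unless $l_i\in\{0,1\}$, and its chain likewise degenerates to equality in those boundary cases.
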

\begin{remark}
Reweighting for DPP/random node pruning follow procedure in Section 2.2 of \cite{mariet2015diversity}.
\end{remark}
\textbf{Proof Idea of Theorem \ref{dpp-betterthan-rand}:} In random and importance node pruning, two student nodes which learn the same teacher node may both survive after pruning with non-zero probability, unlike DPP node pruning (Figure \ref{fig:intuition} (B)). Hence, more teacher nodes may remain unexplained by the student network after random or importance node pruning, resulting in increased GE (details in appendix C).

Together, Theorem \ref{dpp-np-error} and \ref{dpp-betterthan-rand} gives theoretical guarantees for all empirical results of \cite{mariet2015diversity}. Theorem 1 further allows us to show that DIVNET indeed satisfies the stronger version of Lottery Ticket Hypothesis as recently explored in \cite{ramanujan2020s,orseau2020logarithmic}. Importance node pruning with reweighting may be better than \textsc{DIVNET} and was not explored in \cite{mariet2015diversity}. 

\subsection{Comparing Node and Edge Pruning Methods}\label{effect-edge}
In random edge pruning method, for each student hidden node, an incoming edge is kept with probability $c = \Lim{N \rightarrow \infty}\frac{k_e}{N}$. Majority of empirical studies throughout literature use random edge or node pruning as a baseline for empirical comparison (see papers in \cite{blalock2020state}) making it an obvious candidate for our theoretical comparisons as well. It has been shown empirically by \cite{mariet2015diversity} and theoretically by us that DPP node pruning is an above baseline node pruning method. In this section we show that baseline random edge pruning outperforms DPP node pruning which is consistent with the empirical observations that sparse models outperform dense models (section 3.2 of \cite{blalock2020state}). Specifically, here we show that GE after random edge pruning is less than GE after DPP node pruning.  Our proof proceeds as follows: (1) Theorem \ref{random-edge-error} gives a closed form expression for the GE after random edge pruning (2) Theorem \ref{random-edge-vs-DPP-node} then shows that GE of random edge pruning is less than GE of DPP node pruning.

\begin{theorem}\label{random-edge-error}
Assume $(A1)-(A7).$ Consider the random edge pruning method with parameter $\Lim{N \rightarrow \infty}\frac{k_e}{N} = c$ (here $c$ is a constant between 0 and 1). Then the GE  $\epsilon_{c}^{Rand\: Edge}\left(\mathbb{E}\left[ f\right]\right)$ is,
\begin{equation}\label{rand-edge-closed-form}
\begin{split}
  &\frac{M(v^*)^2}{\pi}\bigg[ \frac{1}{Z} \arcsin \frac{c}{1+c} +\left(1-\frac{1}{Z}\right) \arcsin \frac{c^2 }{1+c}\\
&+ \frac{\pi}{6} - 2 \arcsin \frac{c }{\sqrt{2(1+c)}}\bigg] .  
\end{split}
\end{equation}
\end{theorem}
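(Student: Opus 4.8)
The plan is to reduce the whole problem to the closed-form expression for the generalization error as a function of the order parameters, and then to compute those order parameters for the edge-pruned network in the thermodynamic limit. Following \cite{goldt2019dynamics}, for the sigmoidal activation the generalization error admits the $\arcsin$ representation
\begin{equation*}
\epsilon(f) = \frac{1}{\pi}\Bigg[ \sum_{i,k} v_i v_k \arcsin\frac{Q_{ik}}{\sqrt{(1+Q_{ii})(1+Q_{kk})}} + \sum_{n,m} v^*_n v^*_m \arcsin\frac{R_{nm}}{\sqrt{(1+R_{nn})(1+R_{mm})}} - 2\sum_{i,n} v_i v^*_n \arcsin\frac{R_{in}}{\sqrt{(1+Q_{ii})(1+R_{nn})}} \Bigg],
\end{equation*}
so the entire task becomes evaluating $Q$ and $R$ after pruning. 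By the specialization ansatz (A6), at convergence each teacher node $n$ is learned by a group of $Z$ student nodes whose weight vectors coincide with $w^*_n$; taking the teacher weights orthonormal ($R_{nm}=\delta_{nm}$), the unpruned solution therefore has $Q_{ii}=R_{in}=1$ for each student $i$ matched to its teacher $n$, vanishing cross-group overlaps, and second-layer weights $v_i = v^*/Z$. Substituting these into the formula gives $\epsilon=0$, recovering the noiseless fixed point and fixing the normalization I will reuse.

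First I would describe random edge pruning at the level of weights: each coordinate $w_{i,l}$ of a student weight vector is multiplied by an independent mask $\xi_{i,l}\sim\mathrm{Bernoulli}(c)$. The crucial step is to evaluate the resulting order parameters as $N\to\infty$, where they concentrate by the law of large numbers. The delicate point, and the one that drives the final formula, is the self-overlap: since $\xi_{i,l}^2 = \xi_{i,l}$ for a $\{0,1\}$ mask, one obtains $Q_{ii} = \tfrac{1}{N}\sum_l \xi_{i,l}(w^*_{n,l})^2 \to c$ rather than $c^2$, and likewise $R_{in}\to c$ for a matched student--teacher pair. By contrast, for two \emph{distinct} student nodes $i\neq j$ in the same group the masks are independent, so $Q_{ij}\to c^2$, while overlaps across different teacher groups vanish. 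This is precisely what distinguishes the $1+c$ denominators appearing in \eqref{rand-edge-closed-form} from the naive $1+c^2$ one would get by literally pruning the expected weight vector.

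Finally I would substitute these limiting order parameters into the GE formula and collect terms by multiplicity. The $K=ZM$ diagonal student terms, each weighted $(v^*/Z)^2$, produce the $\tfrac{1}{Z}\arcsin\tfrac{c}{1+c}$ piece; the $MZ(Z-1)$ off-diagonal same-group pairs produce the $(1-\tfrac1Z)\arcsin\tfrac{c^2}{1+c}$ piece; the $M$ diagonal teacher terms give $M(v^*)^2\arcsin\tfrac12 = M(v^*)^2\pi/6$; and the $MZ$ matched cross terms give $-2M(v^*)^2\arcsin\tfrac{c}{\sqrt{2(1+c)}}$. Pulling out the common factor $M(v^*)^2/\pi$ yields \eqref{rand-edge-closed-form}. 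I expect the main obstacle to be one of care rather than depth: justifying that the generalization error of the random network concentrates on the value obtained by plugging in the limiting order parameters (so that $\epsilon_{c}^{Rand\: Edge}(\mathbb{E}[f])$ is the typical GE), and faithfully tracking the diagonal-versus-off-diagonal distinction in $Q$ together with the $\xi^2=\xi$ identity that replaces $c^2$ by $c$ in every self-overlap.
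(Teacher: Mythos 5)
Your proposal is correct and follows essentially the same route as the paper: the paper likewise computes the pruned order parameters (its Lemma on random edge pruning: diagonal entries of $Q$ scaled by $c$, off-diagonal entries by $c^2$, $R$ scaled by $c$, $T$ unchanged) and substitutes them into the $\arcsin$ closed form $f_1(Q')+f_2(T')-f_3(R',Q',T')$ of the GE, collecting terms exactly as you do. If anything, your explicit use of $\xi^2=\xi$ to justify the $c$ (rather than $c^2$) scaling of the self-overlaps $Q_{ii}$ is more careful than the paper's own lemma derivation, which asserts the $c$ scaling in its statement but writes $c^2 Q_{ss}$ in the intermediate computation; your version is the one consistent with the $1+c$ denominators in the stated formula.
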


\begin{remark}
Theorem \ref{random-edge-error} gives closed form for ``GE of the expected network" after pruning instead of the ``expected GE of the network" after pruning. However, in the thermodynamic limit ($N \rightarrow \infty$), the order parameters as in Section \ref{prelims} are highly concentrated near their expected values and the two quantities hence become equal.
\end{remark}

\begin{theorem}\label{random-edge-vs-DPP-node}
Assume $(A1)-(A7)$. Let $k_n$ and $c$ satisfy \eqref{node-edge-relation}, and $0 \leq c \leq \frac{1}{Z}$ and $Z\geq 4$. Then

\begin{equation}
\epsilon_{k_n}^{DPP\:Node}\left(f \right)  \geq  \epsilon_{c}^{Rand\:Edge}\left(\mathbb{E}\left [f\right] \right),
\end{equation}
i.e., Random edge pruning outperforms DPP node pruning in the above setup. 
\end{theorem}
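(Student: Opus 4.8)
The plan is to substitute the node--edge parameter relation into the two closed forms and reduce the inequality to a one--variable problem. First I would use \eqref{node-edge-relation}, namely $k_n=cK=cZM$ (so that $c\le 1/Z$ indeed forces $k_n\le M$, the regime where Theorem~\ref{dpp-np-error} applies), and plug it into \eqref{dpp-np-closed-form}. A short simplification collapses the bracket to
\begin{equation*}
\epsilon_{k_n}^{DPP\:Node}(f)=\frac{M(v^*)^2}{6}\Big(1-2c+\tfrac{c}{Z}\Big).
\end{equation*}
Dividing the difference $\epsilon_{k_n}^{DPP\:Node}(f)-\epsilon_{c}^{Rand\:Edge}(\mathbb{E}[f])$ by the common factor $M(v^*)^2$ and using \eqref{rand-edge-closed-form}, the constant terms $\tfrac16$ and $\tfrac1\pi\cdot\tfrac\pi6$ cancel, leaving a function $\Delta(c,z)$ (with $z:=1/Z$) that I must show is nonnegative on the region $\{\,0\le c\le z\le \tfrac14\,\}$.

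The key structural observation is that $\Delta$ is affine in $z$: writing $\Delta(c,z)=A(c)+z\,B(c)$ with $B(c)=\tfrac c6-\tfrac1\pi\arcsin\tfrac{c}{1+c}+\tfrac1\pi\arcsin\tfrac{c^2}{1+c}$, for each fixed $c$ the minimum over the admissible interval $z\in[c,\tfrac14]$ is attained at an endpoint. I would then show $B(c)\le 0$ on $[0,\tfrac14]$: by the mean value theorem $\arcsin\tfrac{c}{1+c}-\arcsin\tfrac{c^2}{1+c}\ge \tfrac{c(1-c)}{1+c}$, so $B(c)\le 0$ reduces to $\tfrac{\pi}{6}\le \tfrac{1-c}{1+c}$, which holds since $\tfrac{1-c}{1+c}\ge \tfrac35>\tfrac\pi6$ for $c\le\tfrac14$. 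Hence $\Delta$ is nonincreasing in $z$, its minimum sits at $z=\tfrac14$ (i.e.\ $Z=4$), and the whole theorem reduces to the single--variable inequality $H(c):=A(c)+\tfrac14 B(c)\ge 0$ on $[0,\tfrac14]$.

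To finish I would prove $H$ is concave on $[0,\tfrac14]$ and check the two endpoints. Concavity follows from $H''\le 0$; after differentiating the three $\arcsin$ terms twice this is a robust, non-tight inequality (numerically $H''\approx -0.5$ throughout), so even crude bounds on the second derivatives of $\arcsin\tfrac{c}{1+c}$, $\arcsin\tfrac{c^2}{1+c}$ and $\arcsin\tfrac{c}{\sqrt{2(1+c)}}$ suffice. Since $H(0)=0$ and $H(\tfrac14)\ge 0$, concavity gives $H(c)\ge (1-4c)\,H(0)+4c\,H(\tfrac14)=4c\,H(\tfrac14)\ge 0$ for all $c\in[0,\tfrac14]$, which combined with the reduction above yields $\Delta\ge 0$ and hence the claim.

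The genuinely delicate point, and the expected main obstacle, is the endpoint value $H(\tfrac14)$: at the corner $c=\tfrac14,\,Z=4$ (equivalently $k_n=M$) the two errors are $\tfrac{3}{32}$ versus $\approx 0.0936$, a margin of only $\sim 2\times 10^{-4}$. Establishing $H(\tfrac14)\ge 0$ rigorously therefore requires second--order bounds on $\arcsin$, e.g.\ $x+\tfrac{x^3}{6}\le \arcsin x\le x+\tfrac{x^3}{6(1-x^2)}$, rather than the linear bound $\arcsin x\ge x$, which is too weak here; the upper bound controls the $\arcsin\tfrac{c}{1+c}$ and $\arcsin\tfrac{c^2}{1+c}$ contributions while the lower bound controls $\arcsin\tfrac{c}{\sqrt{2(1+c)}}$. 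This tightness at a single corner is precisely why the concavity route is attractive: it confines all the precision--sensitive work to one point evaluation, while the remainder of the interval is dispatched by the robust concavity estimate.
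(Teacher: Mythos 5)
Your proposal is correct, and it takes a genuinely different---and substantially stronger---route than the paper. The paper's own proof of Theorem~\ref{random-edge-vs-DPP-node} is numerical: it substitutes the closed forms of Theorems~\ref{dpp-np-error} and~\ref{random-edge-error} into the difference $\epsilon_{k_n}^{DPP\:Node}(f)-\epsilon_{c}^{Rand\:Edge}\left(\mathbb{E}\left[f\right]\right)$ under the constraint \eqref{node-edge-relation}, plots it over $0\le c\le 1/Z$ for $4\le Z\le 30$ (Figure~\ref{fig:DPP-np-minus-rand-edge}A), and observes nonnegativity. You start the same way---your simplification $\epsilon_{k_n}^{DPP\:Node}(f)=\tfrac{M(v^*)^2}{6}\left(1-2c+\tfrac cZ\right)$ is correct---but then replace the plot with analysis: the normalized difference is affine in $z=1/Z$; its $z$-coefficient $B(c)$ is nonpositive on $[0,\tfrac14]$ by the mean-value bound $\arcsin\tfrac{c}{1+c}-\arcsin\tfrac{c^2}{1+c}\ge\tfrac{c(1-c)}{1+c}$ together with $\tfrac{1-c}{1+c}\ge\tfrac35>\tfrac\pi6$, so the worst case is $Z=4$; and the remaining one-variable inequality $H(c)\ge 0$ follows from concavity of $H$ plus the endpoint values $H(0)=0$ and $H(\tfrac14)>0$. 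I checked the pieces: the sign of $B$, the robust negativity of $H''$ (crude bounds on the three $\arcsin$ second derivatives do suffice, e.g.\ $\tfrac14|(\arcsin u_1)''|\le\tfrac{1}{2\sqrt{0.96}}$ while $-2(\arcsin u_3)''\ge 0.64$ on the interval), and the corner evaluation, where your second-order bounds $x+\tfrac{x^3}{6}\le\arcsin x\le x+\tfrac{x^3}{6(1-x^2)}$ with $\pi<3.1416$ indeed certify $H(\tfrac14)>0$ despite the margin being only about $2\times 10^{-4}$ ($3/32\approx 0.09375$ versus $\approx 0.09356$). What your route buys is considerable: it is an actual proof valid for \emph{every} $Z\ge 4$, whereas the paper's figure can only exhibit finitely many $Z$, and a graphical check is least trustworthy exactly at the corner $(c,Z)=(\tfrac14,4)$ where the inequality is nearly saturated; your reduction also explains the threshold, since at $Z=3$, $c=\tfrac13$ the difference is genuinely negative ($\approx-0.006$), so the constant $4$ cannot be lowered. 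What the paper's approach buys in exchange is only speed and a picture of the size of the gap over the whole parameter region.
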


\textbf{Proof Idea of Theorem \ref{random-edge-vs-DPP-node}:} When $k_n \leq M$, node pruned student network leaves at least $(M-k_n)$ teacher nodes unexplained, whereas after random edge pruning, student network can retain at least partial information about every teacher node (see Figure \ref{fig:intuition} (B)). After a pruning routine, the sum of partial information about all teacher nodes in an edge pruned student network dominates the sum of information for the explained subset of teacher nodes in a node pruned student network. 
\begin{figure*}[t]
    \centering
    \includegraphics[width=\textwidth]{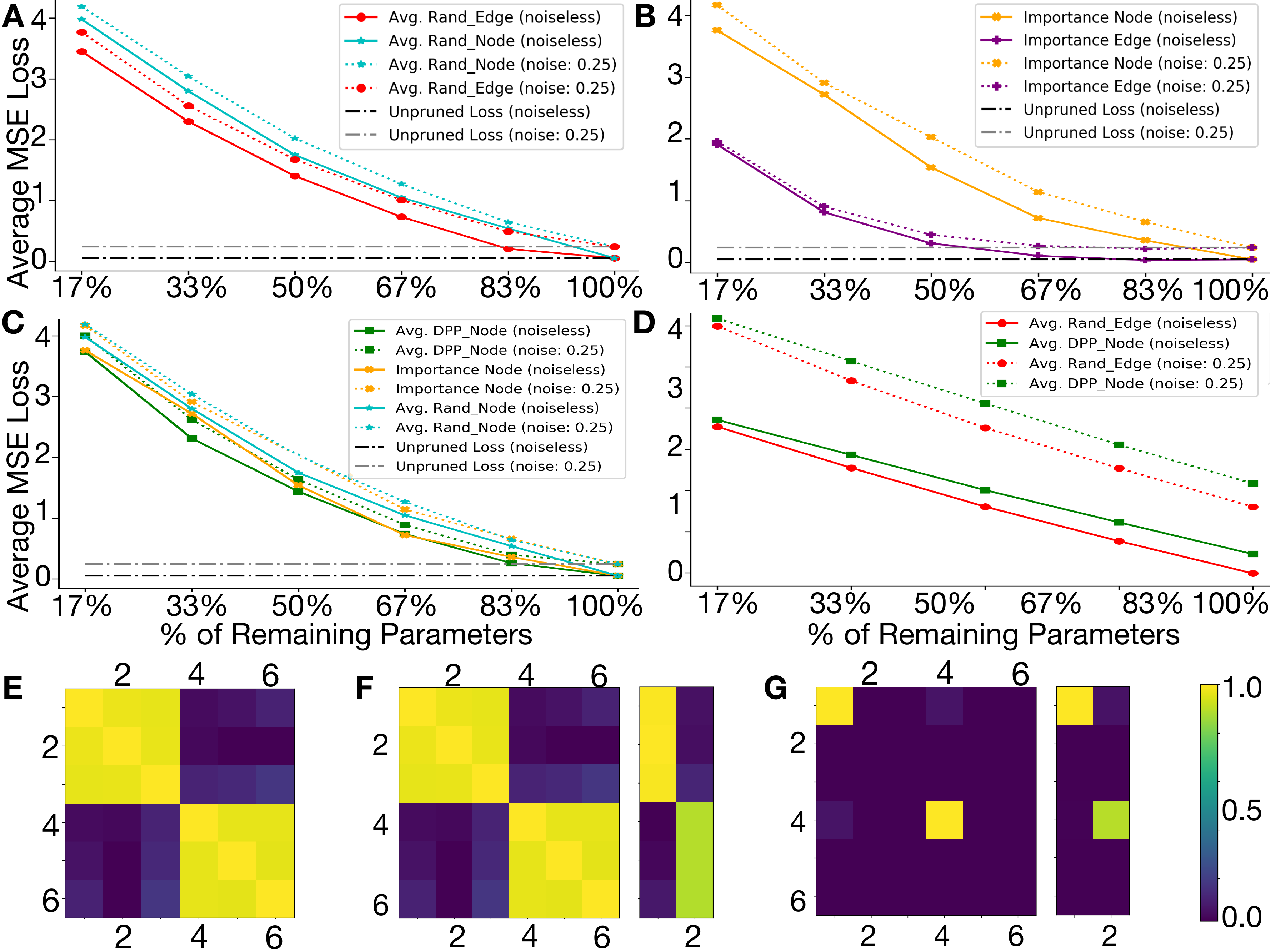}
    \caption{Simulation results in teacher student setup, $M=2$ and $K=6$ for (A-G). \textbf{(A-B)} Edge pruned networks perform better than node pruned networks in all 3 types of pruning methods (random (A), importance (B)), validating Conjecture \ref{sparse-dense}. \textbf{(C)} DPP Node pruning performs better than importance and random node pruning (Theorem \ref{dpp-betterthan-rand}) \textbf{(D)} Baseline random edge pruning beats DPP node pruning (Theorem \ref{random-edge-vs-DPP-node}). For (D), $M=5$ and $K=20.$ \textbf{(E)} The kernel of DPP node pruning is same as $Q$  \textbf{(F)} Order parameters, $Q$ (same as (E)) and $R$ of the  unpruned student network. \textbf{(G)} When only keeping 2 nodes, DPP node pruned student network keeps one from each block shown in (G). } 
    \label{fig:simulation-results}
\end{figure*}

\textbf{Observations:} From Theorem \ref{dpp-betterthan-rand} and \ref{random-edge-vs-DPP-node}, we conclude that random edge pruning outperforms random node pruning. Further, using Theorem \ref{dpp-betterthan-rand} and the intuition that importance edge pruning is better than random edge pruning, we expect that importance edge pruning will outperform importance node pruning. Figure \ref{fig:simulation-results} confirms this empirically in the teacher student setup. These observations leads to the conjecture that for a fixed pruning method, edge pruning outperforms node pruning.

\begin{conjecture}\label{sparse-dense}
Assume $(A1)-(A7)$. Let $k_n$ and $c$ satisfy \eqref{node-edge-relation} and $Prune$ denotes a fixed pruning method (e.g. Rand, Imp) which can be applied to both node and edge. Then, $ \exists  c_{\epsilon} \in (0,1]$ such that for $0 \leq c \leq c_{\epsilon}$,
\begin{equation}
\epsilon_{k_n}^{Prune\:Node}\left(f \right)  \geq  \epsilon_{c}^{Prune\:Edge}\left(f \right).
\end{equation}
\end{conjecture}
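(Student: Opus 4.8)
The plan is to prove the conjecture by a three-link chain that routes the general comparison through the two cases for which we already have exact control, DPP node pruning (Theorem \ref{dpp-np-error}) and random edge pruning (Theorems \ref{random-edge-error}--\ref{random-edge-vs-DPP-node}). Concretely, for a fixed method $Prune$ I would establish
\[
\epsilon_{k_n}^{Prune\:Node}(f) \;\geq\; \epsilon_{k_n}^{DPP\:Node}(f) \;\geq\; \epsilon_{c}^{Rand\:Edge}\!\left(\mathbb{E}[f]\right) \;\geq\; \epsilon_{c}^{Prune\:Edge}(f),
\]
where in the thermodynamic limit the concentration of the order parameters lets me identify GE of the expected network with expected GE, and then read off the conjecture with $c_\epsilon$ the largest $c$ for which every link holds. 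Observe that for $Prune = \mathrm{Rand}$ the outer link is an equality, so the statement collapses to Theorems \ref{dpp-betterthan-rand} and \ref{random-edge-vs-DPP-node} and is already a theorem; the real content is the case of genuinely different methods such as $\mathrm{Imp}$.

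For the first link (``DPP is the best node pruning'') I would use the block structure behind Theorem \ref{dpp-np-error}. Any node pruning that keeps $k_n \leq M$ nodes induces an allocation $(m_1,\dots,m_M)$ with $\sum_j m_j = k_n$ and $0 \le m_j \le Z$, where $m_j$ counts the surviving student nodes in the $j$-th teacher block, and the additive block form gives $\mathrm{GE} = \tfrac{(v^*)^2}{6}\sum_{j=1}^{M}\big(1 - m_j/Z\big)^2$, which reduces to Theorem \ref{dpp-np-error} at the DPP allocation $m_j\in\{0,1\}$. Since $t\mapsto(1-t/Z)^2$ is convex, the sum over integer allocations with $k_n\le M$ is minimized by spreading mass one-per-block, i.e. the DPP configuration, so every other allocation has larger GE; taking expectations covers randomized methods. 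This link should be routine once the additive block form is justified from ansatz (A6).

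The remaining links carry the difficulty. The middle link is exactly Theorem \ref{random-edge-vs-DPP-node}, which requires $Z\geq 4$ and $0\le c\le 1/Z$; I would set $c_\epsilon=1/Z$ and flag that covering $Z\in\{1,2,3\}$ needs a sharper version of that theorem. The step I expect to be the true obstacle is the outer link $\epsilon_{c}^{Rand\:Edge}\geq\epsilon_{c}^{Prune\:Edge}$, namely that random edge pruning is the \emph{worst} edge pruning method, the edge-side analogue of the node-side optimality above. Here the clean $\arcsin$ integrals of Theorem \ref{random-edge-error} break down, because an importance (top-magnitude) edge set is correlated with the teacher weights, so the Gaussian--sigmoid averages defining $Q$ and $R$ no longer decouple. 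I would attempt either (i) a direct extension of the Theorem \ref{random-edge-error} computation to the importance-selected edge set, or, more promisingly, (ii) a monotone-coupling argument showing that importance selection stochastically dominates uniform selection in the teacher--student alignment $R_{in}=w_i^{\top}w_n^{*}/N$, together with monotonicity of GE in that alignment. Making (ii) rigorous for an \emph{arbitrary} admissible $Prune$, rather than only $\mathrm{Imp}$, is the crux: it demands a structural characterization of edge-pruning GE that the present paper supplies only for the random baseline.
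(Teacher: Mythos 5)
First, a framing point: the statement you were asked to prove is presented in the paper as a \emph{conjecture}, not a theorem. The authors give no proof of it; their support consists of simulations (Figure \ref{fig:simulation-results} A,B, and Table \ref{synthetic-loss-table}) plus the informal remark that Theorem \ref{dpp-betterthan-rand}, Theorem \ref{random-edge-vs-DPP-node}, and ``the intuition that importance edge pruning is better than random edge pruning'' together suggest the claim. Your three-link chain
$\epsilon_{k_n}^{Prune\:Node}(f) \geq \epsilon_{k_n}^{DPP\:Node}(f) \geq \epsilon_{c}^{Rand\:Edge}(\mathbb{E}[f]) \geq \epsilon_{c}^{Prune\:Edge}(f)$
is essentially a formalization of that same informal reasoning, so your approach is aligned with the authors' intent rather than divergent from it. Your first link is sound within the paper's framework: Lemma \ref{np-error} reduces the GE of \emph{any} node pruning to the block allocation $(l_1,\dots,l_M)$, and the spreading/convexity (Schur-convexity) argument you invoke is exactly the inequality the paper uses inside its proof of Theorem \ref{dpp-betterthan-rand}, so ``DPP is the best node pruning for $k_n \leq M$'' is a legitimate mild generalization of what is already proved. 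Your observation that the $Prune = \mathrm{Rand}$ case collapses to Theorems \ref{dpp-betterthan-rand} and \ref{random-edge-vs-DPP-node} (with $c_\epsilon = 1/Z$, and only under $Z \geq 4$, $k_n \leq M$) is also correct.

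The genuine gap is the one you flag yourself: the outer link $\epsilon_{c}^{Rand\:Edge} \geq \epsilon_{c}^{Prune\:Edge}$, i.e., that random edge pruning is the \emph{worst} edge pruning method. Nothing in the paper establishes this; the only evidence is a simulation remark accompanying Figure \ref{fig:edge-pruning-all} (that $\norm{Q^{unpruned}-Q^{imp}_{k_e}} \leq \norm{Q^{unpruned}-Q^{rand}_{k_e}}$ empirically), and once the kept edge set is correlated with the learned weights, Lemma \ref{rand_edge-pruning}'s i.i.d.\ Bernoulli masking computation — the only tool the paper provides for edge-pruned order parameters — no longer applies. Worse, as a universal statement over ``a fixed pruning method $Prune$'' this link is simply false: an adversarial choice (e.g., keep the \emph{smallest}-magnitude incoming edges) is an admissible edge pruning that will underperform random, so your chain cannot prove the conjecture at its stated level of generality and must instead be run method-by-method, which is precisely where it stalls for $\mathrm{Imp}$. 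Two further caveats you partially note but should state sharply: (i) the middle link, Theorem \ref{random-edge-vs-DPP-node}, is itself established in the paper only by numerically plotting $\epsilon_{k_n}^{DPP\:Node} - \epsilon_{c}^{Rand\:Edge}$ over a grid of $(Z,c)$ (Figure \ref{fig:DPP-np-minus-rand-edge} A), so even the ``known'' links are not analytic proofs; and (ii) the conjecture quantifies over all $Z \in \mathbb{Z}^{+}$ and all node prunings, while your chain inherits $Z \geq 4$, $c \leq 1/Z$, and $k_n \leq M$. In summary: your proposal is a reasonable and honestly-labeled reduction that correctly identifies the crux, but it is a proof strategy with an open central step, not a proof — which is consistent with the fact that the paper itself leaves the statement as a conjecture.
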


Together, Theorem \ref{random-edge-error}, \ref{random-edge-vs-DPP-node} and Conjecture \ref{sparse-dense} are consistent with empirical observations of ~\cite{blalock2020state}: sparse networks after edge pruning perform better on the unseen test data than dense networks after node pruning with fixed number of parameters. To the best of our knowledge, ~\cite{blalock2020state} based their claims from empirical observations of pruning studies in which the pruned networks were not reweighted. This motivated our choice of comparing GE for DPP node pruning and random edge pruning without any reweighting. However, with reweighting from \cite{mariet2015diversity}, GE of DPP node pruning will be less than GE of random edge pruning, highlighting the impact of reweighting proposed by \cite{mariet2015diversity} (proof and details in appendix C).

We find that GE analysis on teacher-student setup is flexible for various pruning methods and this framework can be extended to theoretically understand other pruning methods which are outside the scope of this work. 

\begin{figure*}[t]
    \centering
    \includegraphics[width=\textwidth]{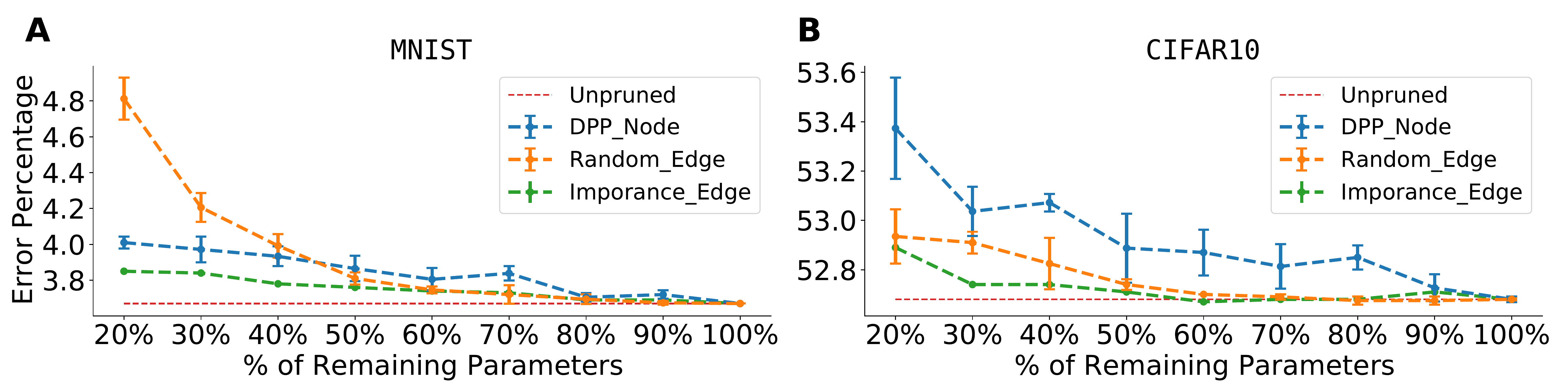}
    \caption{ Comparing different edge pruning methods with DPP Node pruning method on the \texttt{MNIST} (A) and \texttt{CIFAR10} (B) dataset. Horizontal axis represents the percentage of remaining parameters in $1^{st}$ layer after pruning. The vertical axis shows corresponding test error. Both magnitude based edge pruning method (importance pruning) and baseline random edge pruning method outperforms DPP Node pruning which confirms Theorem \ref{random-edge-vs-DPP-node} and the conjecture proposed in \cite{blalock2020state}.} 
    \label{fig:data-results}
\end{figure*}

\section{Experiments} 
\subsection{Simulations}\label{Simulation}
We run the DPP node, random edge/node, and importance edge/node pruning simulations under the teacher-student setup. For all the simulations, we sampled the $800000$ i.i.d input samples from $\mathcal{N}(0,1)$ as training data and $80000$ as testing data. Following notations from Table \ref{tab:notations}, we set $M = 2$, $K = 6$, $N = 500$, and $v^* = 4$. The first layer teacher network weights $\bm{w}^*$ and all the student network parameters $\theta = \{\bm{w},\bm{v}\}$ were drawn independently from $\mathcal{N}(0,1)$ as initialization. We choose learning rate $\eta = 0.50$, and it is scaled to $\frac{\eta}{\sqrt{N}}$ for $\bm{w}$ and $\frac{\eta}{N}$ for $\bm{v}$. We run the simulations for both noiseless ($\sigma = 0$ in \eqref{label-teacher}) and noisy ($\sigma =0.25$) output labels. For comparisons between node and edge pruning, we use the node-to-edge ratio [$1:83, 2:166, 3:250, 4:333, 5:417, 6:500$] to keep the number of parameters the same, given $N = 500$, $K = 6$, and $M = 2$. In addition, we run the same simulation with $K = 5$ and $M = 20$, see Figure~\ref{fig:simulation-results}D. For other simulation details and results, see appendix. Note that no pruning method undergoes reweighting for reported simulation results which we therefore use to verify and validate our theoretical results without reweighting.

\textbf{Key Observations:} 
\begin{itemize}
    \item The expected kernel of the DPP node pruning and the $Q$ matrix are the same which we exploit for Theorem \ref{dpp-node-error} (Figure \ref{fig:simulation-results}E,F).
    \item For $k_n =2$ and $M=2$, DPP node pruning chooses exactly one node from each of the diagonal block of the kernel (see Figure \ref{fig:simulation-results}G) which validates Theorem \ref{dpp-node-error}.
    \item DPP node pruning outperforms random and importance node in both noisy and noiseless case (see Figure \ref{fig:simulation-results}C) which confirms Theorem \ref{dpp-betterthan-rand}. 
    \item Random edge pruning is better than DPP node pruning for $c\leq \frac{1}{Z}$ with $Z=4$ and $M=5$ in both noisy and noiseless cases (see Figure \ref{fig:simulation-results}D), validating Theorem \ref{random-edge-vs-DPP-node}. 
    \item We see Conjecture \ref{sparse-dense} holds for random, importance edge and node pruning (see Figure \ref{fig:simulation-results}A,B)
\end{itemize}

\subsection{Real Data}\label{real-data-experiments}

In this section, we compare \textsc{DIVNET} by \cite{mariet2015diversity} with random edge pruning with reweighting, and importance edge pruning with reweighting on the \texttt{MNIST}~\cite{lecun2010mnist} and \texttt{CIFAR10}~\cite{krizhevsky2009learning} datasets. We used the exact same network architectures as in Table 1 of \cite{mariet2015diversity} for \texttt{MNIST} and \texttt{CIFAR10}, respectively. Note that, for the real data we consider network structures with multiple layers. Following \cite{mariet2015diversity}, we performed all pruning methods on the first layer. 
We compare the number of parameters as $k_e = \frac{k_n(d_{\text{input}} + h_2)}{h_1} - h_2$ where $k_e$ is the number of edges kept for each node in edge pruning, and $k_n$ is the number of nodes kept in the hidden layer for node pruning; $d_{\textrm{input}}$, $h_1$, and $h_2$ represent the dimension of the input, size of the first hidden layer, and size of the second hidden layer, respectively. As in \cite{mariet2015diversity}, $h_1 = h_2$. We trained our model until the training error reaches predefined thresholds (Table 1 in \cite{mariet2015diversity}) and then perform the pruning. For hyperparameters and other details, see~\ref{hyperparam_real_data}.

\textbf{Remark:} Note that we have not presented the results comparing different node pruning methods among themselves as they were already discussed in \cite{mariet2015diversity}.
\\

\textbf{Key observations:} 
\begin{itemize}
    \item Baseline random edge pruning method outperforms DIVNET across all percentages of parameters retained in the network for \texttt{CIFAR10} dataset shown in Fig \ref{fig:data-results} B. However, for \texttt{MNIST} dataset, DIVNET performs better than random edge intitally but if $> 40\%$ of parameters are retained in the network random edge outperforms DIVNET (see Fig \ref{fig:data-results} A).
    \item Importance edge pruning performs better than both DIVNET and the baseline random edge pruning method on both the real data sets which highlighting the potential of magnitude based pruning method (see Fig \ref{fig:data-results} A and B).
\end{itemize}

\section{Discussion and Future Work}
Our work takes the first step to develop theoretical comparison for empirical observations of pruning methods in feed forward neural networks. We identify the usefulness of teacher-student setup for providing theoretical guarantees of pruning methods. We then use this setup to theoretically show that DIVNET should indeed outperform random and importance node pruning techniques. We further show that random edge pruning outperforms DPP node pruning providing a theoretical proof for the popular empirical observation: sparse (node) networks perform better than dense (edge) pruned networks for fixed number of parameters. Finally, we also are able to show that DIVNET satisfies a stronger version of the Lottery Ticket Hypothesis. Our work consolidates the understanding of a particular class of node and edge pruning theoretically. 

When comparing two neural networks, using the number of parameters may not always be the optimal choice, instead, measuring the \textit{capacity} and \textit{expressiveness} of neural networks~\cite{arora2017generalization} can provide new insights. All our theoretical results have been proved on single hidden layer neural networks which gives future scope of extending them to multiple hidden layer networks. However, our empirical results hold for neural networks with multiple hidden layers suggesting the possibility of generalization of our results.

Throughout this work, we focus only on pruning methods in which a feedforward pre-trained neural network is pruned once without retraining.  We choose this class for two primary reasons: (1) it is feasible to make theoretical comparisons with closed form solutions of GE, and, (2) with some assumptions, it has been shown by recent studies \cite{ramanujan2020s,malach2020proving,orseau2020logarithmic} that every sufficiently over-parameterized network contains a sub network that, even without training, achieves comparable accuracy to the trained large network. This proven conjecture is even stronger than the Lottery Ticket Hypothesis \cite{frankle2018lottery}. Hence, comparing performance of pruning methods within the aforementioned class in the teacher-student setup allowed us explore the existence of such a sub network. 

We compare our theoretical results with random pruning and importance pruning which subsumes ideas underlying vast majority of pruning techniques and do not focus on any specific algorithm. A more specific algorithm based justification can also be an extension (may not always be trivial however) of this paradigm. 

We introduce the teacher-student setup for proving results related to pruning methods which can further be extended to prove other empirical results in the pruning domain. Such theoretical insights can also be used as a means to guide development of theory-motivated new and better pruning algorithms on other neural network architectures like CNNs and RNNs in future work.

\bibliography{uai2021-template}

\appendix
\onecolumn 

\section{GE in Two Layer Network}
For the theoretical analysis we consider the following assumptions from~\cite{goldt2019dynamics}
\begin{enumerate}
\setlength{\itemsep}{0.0pt}
    \item[(A1)] If $\bm{x}= (x_1,\ldots,x_N)$ is an input then $x_i \in \mathcal{N}(0,1).$ Also, $N \rightarrow \infty.$
    \item[(A2)] Both the teacher and the student networks have only one hidden layer.
    \item[(A3)] $M,K$ denotes the number of hidden nodes for the teacher and student network respectively and $K\geq M$ and $K= Z \cdot M$ where $Z \in \mathbb{Z}^{+}.$
    \item[(A4)] The activation in the hidden layer is sigmoidal for both teacher and student network.
    \item[(A5)] The output $\in$ $\mathbb{R}$ (i.e., regression problem).
    \item[(A6)] The order parameters satisfy the ansatz as in (S58) - (S60) of~\cite{goldt2019dynamics}.
    \item[(A7)] No noise is added to the labels generated by the teacher network, i.e., $\sigma = 0$.
\end{enumerate}{}
With the above assumptions, authors of ~\cite{goldt2019dynamics} gave a closed form of the GE as follows:
\begin{equation}\label{ge-breakdown}
    \epsilon_{g}  =  f_1(Q) + f_2(T) - f_3(R,Q,T)
\end{equation}
where,
\begin{align}
    f_1(Q) & = \frac{1}{\pi} \sum_{i,k} v_i v_k \arcsin \frac{Q_{ik}}{\sqrt{1+Q_{ii}} \sqrt{1+Q_{kk}}}\label{f1}\\ 
    f_2(T) & = \frac{1}{\pi}  \sum_{n,m} v_n^* v_m^* \arcsin \frac{T_{nm}}{\sqrt{1+T_{nn}} \sqrt{1+T_{mm}}}\label{f2}\\
    f_3(R,Q,T) & = \frac{2}{\pi}  \sum_{i,n} v_i v_n^* \arcsin \frac{R_{in}}{\sqrt{1+Q_{ii}} \sqrt{1+T_{nn}}}\label{f3}
\end{align}
where $Q,R,T$ are the order parameters as defined in main text. We also have the assumption \eqref{node-edge-relation} about the relation between number of edges and nodes kept after pruning.

\section{Properties of DPP Kernel}\label{kernel-prop}
In main text we see that each node in the hidden layer of a student network carries certain amount of information about the training data and it is captured in a vector form. We create an information matrix by accumulating the information vectors of these hidden nodes.  For simplicity of theoretical analysis, we have considered the kernel as the inner product of the information matrix. In the thermodynamic limit, the inner product is divided by the input dimension. Formally, if $\bm{h}_i$ and $\bm{h}_j$ are the information at $i^{th}$ hidden node and $j^{th}$ hidden node respectively, then
$$L_{ij} =  \frac{1}{N}  \frac{1}{n} \bm{h}_i^T \bm{h}_j$$
where $n$ is the total number of training examples. It can be seen that the analysis for the kernel defined in main text is similar. Note that all analyses are for the student network trying learn from the teacher network. Refer to main text for details of notations.

\begin{lemma}\label{expected-node}
Assume (A1) - (A7). Then the expected kernel of DPP Node for the hidden layer is the order parameter $Q$.
\end{lemma}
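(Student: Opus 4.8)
The plan is to compute the expectation of the (simplified, inner-product) DPP kernel entrywise over the Gaussian input distribution and show that it reduces exactly to the second-moment structure of the student pre-activations, which is $Q$ by definition. Concretely, I would take the information vector $\bm{h}_i$ at hidden node $i$ to be its information \emph{before} activation across the $n$ training inputs, i.e.\ $h_{it} = w_i \bm{x}^t = \sum_a w_{ia} x_a^t$, matching $\bm{b}_i^l$ from the main text, so that the appendix kernel entry reads $L_{ij} = \frac{1}{N}\frac{1}{n}\bm{h}_i^T\bm{h}_j = \frac{1}{Nn}\sum_{t=1}^n (w_i\bm{x}^t)(w_j\bm{x}^t)$.

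First I would take the expectation over the i.i.d.\ inputs and push it inside the finite sum by linearity, reducing everything to a single second-moment computation $\mathbb{E}[(w_i\bm{x})(w_j\bm{x})]$. Expanding the two products into coordinate sums gives $\sum_{a,b} w_{ia} w_{jb}\,\mathbb{E}[x_a x_b]$, and assumption (A1) --- each $x_a\sim\mathcal{N}(0,1)$ independently --- forces $\mathbb{E}[x_a x_b]=\delta_{ab}$. Hence each term collapses to $\sum_a w_{ia}w_{ja}=w_i^T w_j$, and summing the $n$ identical terms and dividing by $Nn$ yields $\mathbb{E}[L_{ij}] = \frac{w_i^T w_j}{N} = Q_{ij}$, exactly the order parameter. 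This is the whole calculation; as a clean corollary the diagonal gives $\mathbb{E}[L_{ii}] = Q_{ii}$, so the expected kernel inherits the block structure that $Q$ carries under the ansatz (A6).

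The main obstacle is not this moment identity but justifying the reduction that precedes it: the DPP kernel actually used (Section~\ref{prelims}) is the Gaussian kernel $\bm{L'}_{st}=\exp(-\beta\norm{\bm{a}_s^l-\bm{a}_t^l}^2)$ built from \emph{post}-activations $\bm{a}_i^l=g(\bm{b}_i^l)$, whereas the identification with $Q$ requires the \emph{pre}-activation inner product. I would handle this in two steps: (i) argue, as the appendix proposes, that for the theoretical analysis the Gaussian kernel can be replaced by the normalized inner-product kernel on the information matrix without altering the relevant block structure; and (ii) note that in the thermodynamic limit $N\to\infty$ the empirical average $\frac{1}{n}\sum_t$ concentrates at its mean, so that the random kernel and its expectation coincide, which is what licenses speaking of ``the expected kernel'' being $Q$ rather than merely equal in expectation. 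Making the post-activation-to-pre-activation passage precise --- equivalently, tracking the constants absorbed into $\beta$ and the normalization so that the exact equality survives --- is the only genuinely delicate point; the remainder is the one-line Gaussian second-moment computation above.
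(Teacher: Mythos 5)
Your proposal is correct and takes essentially the same route as the paper: the appendix proof also works with the pre-activation inner-product kernel $L_{ij}=\frac{1}{Nn}\bm{h}_i^T\bm{h}_j$ with $h_{ij}=t_j^T\bm{w}_i$, and obtains $\mathbb{E}[L_{ij}]=\frac{1}{N}\bm{w}_i^T\bm{w}_j=Q_{ij}$ from the Gaussian second-moment identity $\mathbb{E}[t_kt_k^T]=\bm{I}$, which is your $\mathbb{E}[x_ax_b]=\delta_{ab}$ computation in matrix form. The passage from the main text's Gaussian post-activation kernel to this simplified kernel, which you correctly flag as the delicate point, is not proved in the paper either; it is merely asserted there ("the analysis for the kernel defined in main text is similar"), so your treatment is, if anything, more candid about that gap.
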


\begin{proof}[Proof of Lemma \ref{expected-node}]
For the two-layer teacher-student setup, the hidden layer gets information $(\bm{h}_1,\ldots,\bm{h}_{K})$ from the input layer, where $\bm{h}_i= (h_{i1},\ldots,h_{in})$
and $h_{ij} (= t_j^T\bm{w}_i)$ is the information at $i^{th}$ hidden node on $j^{th}$ input data ($t_j$). Hence, 
$$\bm{h}_i^T\bm{h}_j  =  \sum_{k=1}^n h_{ik} h_{jk} =  \sum_{k=1}^n  t_k^T\bm{w}_i \cdot t_k^T\bm{w}_j  = \sum_{k=1}^n  \bm{w}_i^T t_k \cdot t_k^T\bm{w}_j  = \sum_{k=1}^n  \bm{w}_i^T (t_k  t_k^T) \bm{w}_j$$
But for the given input distribution (i.i.d. Gaussian), $\mathbb{E}[t_k  t_k^T] = \bm{I}_{N\times N}.$ Hence, $\Lim{N \rightarrow \infty} \mathbb{E}[L_{ij} ]  =\Lim{N \rightarrow \infty}\mathbb{E}[ \frac{1}{N}\frac{1}{n}\bm{h}_i^T\bm{h}_j]= \Lim{N \rightarrow \infty} \frac{1}{N} \bm{w}_i^T \bm{w}_j=Q_{ij}$, and we have the lemma.
\end{proof}

From \cite{goldt2019dynamics} we know that $Q$ is a block diagonal matrix where each ``block" (or ``group" used interchangeably henceforth) refers to the set of student hidden nodes that represent (explian/learn) one particular teacher hidden node.



\section{Proof of the Theorems}\label{proof-of-theorems}

\begin{proof}[Proof of Theorem \ref{dpp-np-error} and \ref{dpp-betterthan-rand}]
Let $H_R = \left\{ h_{i_1}, \ldots, h_{i_{k_n}} \right\}$ be the set of selected nodes by DPP Node pruning method. Recall from~\cite{goldt2019dynamics} that every student hidden node specializes in learning a teacher node. Denote $t(h)$ to be the  teacher node learnt by $h$. $S_{m}\subseteq H_R$  be the set of selected hidden nodes of the pruned network which learnt the $m^{th}$ teacher node , i.e., $S_m = \{h \in H_R | t(h) = t_m \}$ ($t_m$ is the $m^{th}$ teacher node). Hence, $prn = |\{\mathds{1}(|S_m|>0)| 1 \leq m \leq M \}| $ is the  number of teacher nodes explained by the pruned network and W.L.O.G. we can assume that $t_1, \ldots, t_{prn}$ are those set of teacher nodes. Let $l_1, \ldots, l_{prn}$ be the number of student nodes in the pruned network which learn the corresponding teacher node. Note that, $\sum_{i=1}^{prn} l_i =  k_n$ and $l_i\leq Z$ (where $Z$ is the number of student nodes dedicated to learn a single teacher node in the unpruned network) for all $i$. Applying Lemma \ref{np-error} directly we can see that the GE for the pruned network  is 
\begin{equation}\label{randnperr}
    \frac{(v^*)^2}{6}\left[ \sum_{i=1}^{prn}\left(1-\frac{l_i}{Z} \right)^2\right] + \frac{(M-prn)(v^*)^2}{6}
\end{equation}

The first part of \eqref{randnperr} is the GE for the group whose corresponding teacher node is partially explained and the second part accounts for the GE due to unexplained teacher nodes (number of such teacher nodes are $M-prn$).  From Lemma \ref{expected-node} we know that the expected kernel matrix for DPP Node pruning is the order parameter $Q$ and it becomes a block diagonal matrix after the training converges, where size of each block is $Z$ (which is also the number of student nodes dedicated to learn a single teacher node in the unpruned network). Because of the block diagonal property of the DPP kernel matrix, at most 1 student hidden node will be chosen from each block, i.e., $l_i =1$ $\forall i$. Hence, $prn = k_n$. From Lemma \ref{np-error} we can see that the GE of node pruned network only depends on the number of student node survived in each block after pruning, and, for DPP node pruning, it is always 1 (given $k_n \leq M$). This is why there is no expectation in the GE term. So for DPP node pruning the GE is,
$$\epsilon_{k_n}^{DPP\: Node}(f) = (v^*)^2 \left [ \frac{k_n}{6}\left( 1 - \frac{1}{Z}\right)^2 + \frac{M-k_n}{6} \right ].$$
Each of the $k_n$ student nodes in the pruned network learns a different teacher node. Consider one such teacher node and call it $t_i$. In the unpruned network, there are $Z$ student hidden nodes which learn a single teacher node $t_i$, only one of which survives after DPP node pruning. The first part of the error is due to the removal of student nodes ($Z-1$ student nodes for each $t_i$). However, these errors can be retrieved by reweighting the survived student node. On the contrary, there are $M-k_n$  teacher nodes which don't have any representative (some student hidden node from the set of student nodes which specialized in this particular teacher node) in the pruned network. And the error (second part of the GE) due to those  nodes can not be retrieved even after reweighting. Hence, the GE after reweighting becomes,
$$(M-k_n) \times \frac{(v^*)^2 }{6}$$
Thus, we have the Theorem \ref{dpp-np-error}. 

Next, we will prove Theorem \ref{dpp-betterthan-rand}. We will show, for any network pruned by Random Node, the GE is more than the expected GE of DPP Node pruning. Recall the randomly pruned network $f$ discussed in the beginning of the proof. From Lemma \ref{np-error} we can see that for node pruning the GE only depends on the number of nodes survived in each block. From \eqref{randnperr} we have,
\begin{align}
&\;\;\;\;\;\; \epsilon_{k_n}^{Rand\: Node}(f) \nonumber  \\
& = \frac{(v^*)^2}{6}\left[ \sum_{i=1}^{prn}\left(1-\frac{l_i}{Z} \right)^2\right] + \frac{(M-prn)(v^*)^2}{6}\nonumber \\
& = \frac{(M-k_n)(v^*)^2}{6} +\sum_{i=1}^{prn} \left [ (l_i-1)\frac{(v^*)^2}{6}  +\frac{(v^*)^2}{6} \left(1 - \frac{l_i}{Z}\right)^2 \right] \nonumber \\
    & \geq \frac{(M-k_n)(v^*)^2}{6} + \sum_{i=1}^{prn} l_i \frac{(v^*)^2}{6} \left(1-\frac{1}{Z}\right)^2\label{intermediate} \\
    & = \frac{(M-k_n)(v^*)^2}{6} + l \frac{(v^*)^2}{6} \left(1-\frac{1}{Z}\right)^2\nonumber \\
    & = \epsilon_{k_n}^{DPP\: Node}(f) \nonumber
\end{align}{}
where \eqref{intermediate} follows from the inequality below:
$$(l_i-1)\frac{(v^*)^2}{6}  +\frac{(v^*)^2}{6} \left(1 - \frac{l_i}{Z}\right)^2   = l_i \frac{(v^*)^2}{6} \left[ 1 +\frac{1}{Z^2}-\frac{2}{Z}\right] \geq l_i \frac{(v^*)^2}{6} \left(1-\frac{1}{Z}\right)^2$$
which proves the first part of Theorem \ref{dpp-betterthan-rand}. The proof for the reweighted network is similar.

In case of importance node pruning, the nodes with lowest absolute value of outgoing edges are dropped. Following \cite{goldt2019dynamics} the outgoing weights of all the hidden teacher nodes are equal (we call it $v^*$).  Also, from Lemma \ref{converge-second} we see that the sum of the weights of the outgoing edges of the student nodes which learn the same teacher node add up to the outgoing edge weight of the corresponding teacher hidden node. Moreover, we assume the ansatz $v_i = v_j$ when $i,j \in G_n$, where $G_n$ denotes the set of student nodes which learn the same teacher node $t_n$. Hence, we can see that all the outgoing edges are approximately similar. We also verify this fact experimentally. Therefore, this defines an approximately uniform distribution on the set of hidden nodes. Hence, this is almost same as random node pruning and so the result follows from Theorem \ref{dpp-betterthan-rand}.  
\end{proof}

\begin{remark}
The comparison between performance of importance node pruning and \textit{DIVNET} depends on the fact that all the outgoing edges of the teacher hidden nodes are equal. However, when the outgoing weights are not equal the importance pruning first selects student hidden nodes from a group whose corresponding teacher node has the highest weight. Once all the student nodes are selected from that group then it selects the group whose corresponding teacher node has second highest outgoing edge weight and the process continues. Because of this approach, even without reweighting a complete information about the teacher node  is preserved in the pruned network.  However, in  DPP node pruning one candidate from each group (representing a particular teacher node) is selected first. But if a member is selected from a group then the reweighting method can recover the complete lost information for the corresponding group.  Hence, \textit{DIVNET} is able to preserve information about more number of teacher hidden nodes than importance pruning which results in better performance.
\end{remark}

\begin{proof}[Proof of Theorem \ref{random-edge-error}]
In this theorem, we will give the GE of the expected network pruned by the Random Edge method. Pruning is performed on the edges between input layer and the hidden layer. Hence, the order parameter changes.  From Lemma \ref{rand_edge-pruning}, we have the order parameters of the expected network (call these $Q',R',T'$). However, the weights of the second layer remain unchanged. Putting these values in \eqref{f1}, \eqref{f2} and \eqref{f3} we have, 
\begin{align}
    f_1(Q') & = \frac{1}{\pi} \sum_{i,k} v_i v_k \arcsin \frac{Q'_{ik}}{\sqrt{1+Q'_{ii}} \sqrt{1+Q'_{kk}}}\nonumber\\
    & = \frac{M(v^*)^2}{\pi} \arcsin \frac{c^2 }{1+c}  +\frac{M(v^*)^2}{Z\pi}\left[\arcsin \frac{c}{1+c}-\arcsin \frac{c^2 }{1+c}\right]
\end{align}
and,
\begin{align}
    f_3(R',Q',T') & =  \frac{2}{\pi}  \sum_{i,n} v_i v_n^* \arcsin \frac{R'_{in}}{\sqrt{1+Q'_{ii}} \sqrt{1+T'_{nn}}}\nonumber\\
    & = \frac{2 M (v^*)^2}{\pi} \arcsin \frac{c }{\sqrt{2(1+c)}}
\end{align}
Therefore, the GE of the expected network after Random Edge pruning is,
$$\frac{M(v^*)^2}{\pi}\left[ \arcsin \frac{c^2 }{1+c} + \frac{\pi}{6} - 2 \arcsin \frac{c }{\sqrt{2(1+c)}} \right] +\frac{M(v^*)^2}{Z\pi}\left[\arcsin \frac{c}{1+c}-\arcsin \frac{c^2 }{1+c}\right]$$
This proves the first part of the theorem.
\end{proof}{}

\begin{figure*}
    \centering
    \includegraphics[scale=0.35]{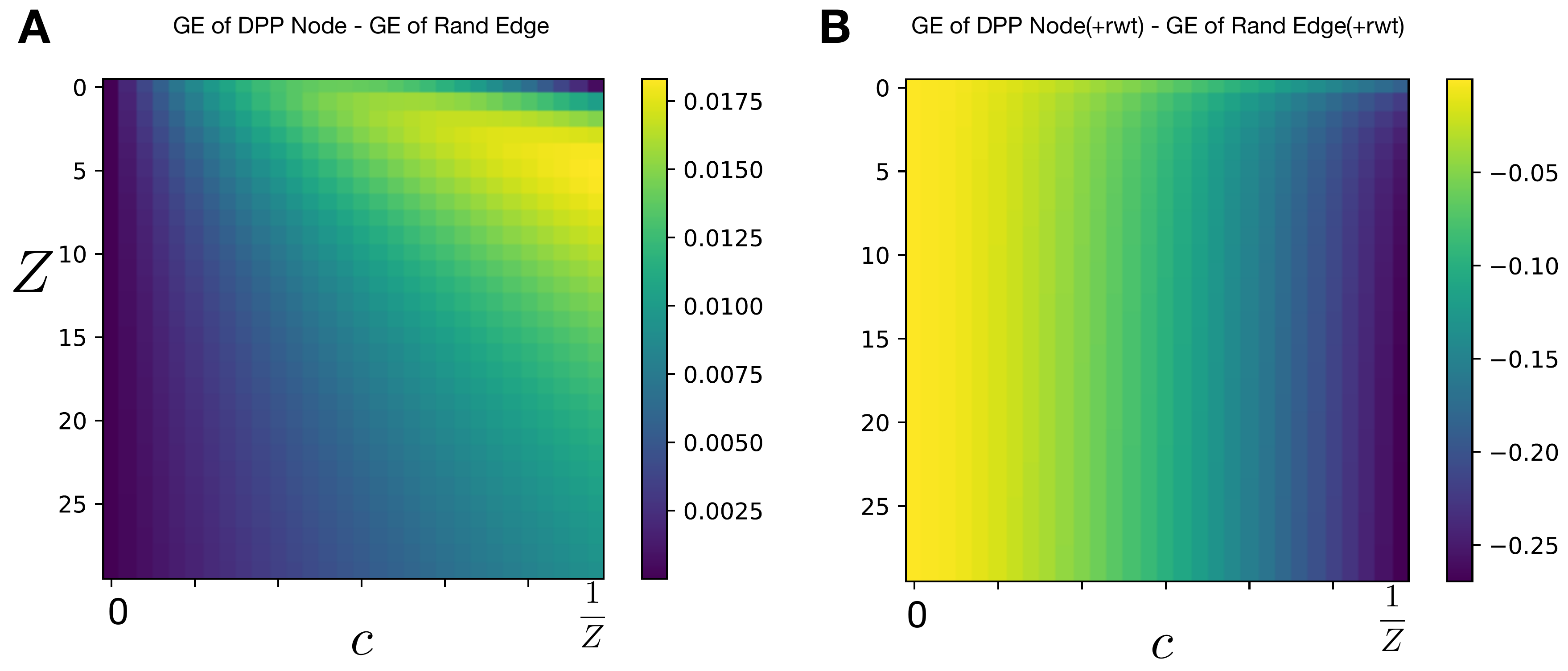}
2    \caption{ \textbf{(A)} Difference between the GE of DPP node pruning and Random edge pruning for $4 \geq Z \geq 30$. The matrix consist of only nonzero entries which proves that random edge pruning performs better than DPP node pruning when parameter count is same. \textbf{(B)} Difference between the GE of DPP node pruning with reweighting and Random edge pruning with reweighting for $4 \geq Z \geq 30$. The matrix consist of only negative entries which proves that random edge pruning can never perform better than DPP node pruning when reweighting is applied in the second layer.}
    \label{fig:DPP-np-minus-rand-edge}
\end{figure*}

\begin{proof}[Proof of Theorem \ref{random-edge-vs-DPP-node}]
Theorem \ref{dpp-np-error} and \ref{random-edge-error} provide the closed form of the GE after DPP node pruning and random node pruning respectively. Using this closed form we plot $\epsilon_{k_n}^{DPP\: Node}(f) - \epsilon_{c}^{Rand\: edge}(f)$ in Figure \ref{fig:DPP-np-minus-rand-edge} A. Here $k_n$ and $c$ satisfy \eqref{node-edge-relation}, i.e., parameter count is same after two kinds of pruning.  We can see for $Z\geq 4$ this value is $\geq 0$ given $0\leq c \leq 1.0/Z,$ which proves the theorem.
\end{proof}

\begin{remark}
Our results hold for $Z\geq 4$, where $Z$ is the number of student nodes which learn the same teacher node. This is because in DPP node pruning at most $1$ student node survives per group. As a result for larger $Z$ the lost information per group is higher (in the scale of $\left(1-\frac{1}{Z}\right)^2$).
\end{remark}

Next we state the impossibility result as discussed inmain text. We will show that, no reweighting scheme in the second layer for random edge pruning which is based on scaling can beat DPP node pruning after reweighting. Formally we have the following:

\begin{theorem}\label{reweighted-random-edge}
Assume $(A1)-(A7)$. Let $k_n$ and $c$ satisfy \eqref{node-edge-relation}, and $0 \leq c \leq \frac{1}{Z}$ and $Z\geq 4$. Assume the reweighting scheme for random edge in second layer such that, $\hat{v}_i = A v_i$.  Then $\forall A \in \mathbb{R}$ we have,

\begin{equation}
\hat{\epsilon}_{k_n}^{DPP\:Node}\left(f \right)  \leq  \hat{\epsilon}_{c}^{Rand\:Edge}\left(\mathbb{E}\left [f\right] \right)
\end{equation}
\end{theorem}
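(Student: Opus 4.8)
The plan is to reduce the statement — which quantifies over all scaling factors $A$ — to a single optimization over $A$ followed by a comparison against the reweighted DPP bound from \eqref{rewt-dppnp}. The crucial structural observation is how a scaling reweighting $\hat v_i = A v_i$ acts on the three pieces of the generalization error in \eqref{ge-breakdown}. The term $f_2(T)$ in \eqref{f2} depends only on the (unchanged) teacher weights and the teacher order parameter, so it is unaffected; $f_1(Q)$ in \eqref{f1} is quadratic in the second-layer weights and hence scales as $A^2$; and $f_3(R,Q,T)$ in \eqref{f3} is linear in those weights and scales as $A$. Writing $Q',R',T'$ for the order parameters of the expected random-edge-pruned network (the ones computed in the proof of Theorem \ref{random-edge-error}), the reweighted random-edge GE is therefore the quadratic
\[
\hat{\epsilon}_{c}^{Rand\:Edge}\!\left(\mathbb{E}[f]\right) = A^2 f_1(Q') - A\, f_3(R',Q',T') + f_2(T').
\]

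First I would minimize this quadratic over all $A \in \mathbb{R}$. Since for $0<c<1$ one has $f_1(Q') > 0$ (both arcsine contributions in its closed form are positive), the parabola opens upward and its minimum is attained at $A^{\star} = f_3/(2 f_1)$ with value $f_2(T') - f_3(R',Q',T')^2/(4 f_1(Q'))$. Because this is the smallest GE achievable by \emph{any} scaling reweighting, proving the theorem for all $A$ is equivalent to proving it at the optimum, i.e.\ it suffices to show
\[
\hat{\epsilon}_{k_n}^{DPP\:Node}(f) \;\le\; f_2(T') - \frac{f_3(R',Q',T')^2}{4 f_1(Q')}.
\]

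Next I would substitute the closed forms already obtained in the proof of Theorem \ref{random-edge-error}, together with $f_2(T') = M(v^*)^2/6$ (the teacher is orthonormal under the ansatz, so each diagonal term contributes $\arcsin\tfrac12 = \pi/6$), and the reweighted DPP value $\hat{\epsilon}_{k_n}^{DPP\:Node}(f) = (M-k_n)(v^*)^2/6$ from \eqref{rewt-dppnp}. Using \eqref{node-edge-relation} with $K=ZM$, which gives $k_n = cZM$, the common factor $M(v^*)^2$ cancels and the inequality collapses to the single transcendental statement
\[
\frac{1}{\pi}\cdot\frac{\left(\arcsin\dfrac{c}{\sqrt{2(1+c)}}\right)^{2}}{\arcsin\dfrac{c^{2}}{1+c} + \dfrac{1}{Z}\!\left(\arcsin\dfrac{c}{1+c} - \arcsin\dfrac{c^{2}}{1+c}\right)} \;\le\; \frac{cZ}{6},
\]
to be verified for $0 \le c \le 1/Z$ and $Z \ge 4$, with equality at $c=0$.

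The main obstacle is this last inequality, which mixes three distinct arcsine terms and the discrete parameter $Z$ and admits no clean elementary closed form. I would therefore verify it numerically over the admissible region $\{(c,Z): Z \ge 4,\ 0 \le c \le 1/Z\}$, exactly as was done for Theorem \ref{random-edge-vs-DPP-node}, recording the sign of $\hat{\epsilon}_{k_n}^{DPP\:Node}(f) - \hat{\epsilon}_{c}^{Rand\:Edge}(\mathbb{E}[f])$ on a grid as displayed in Figure \ref{fig:DPP-np-minus-rand-edge}B, whose entries are all negative. For a fully rigorous argument near the boundary $c=0$ I would use the small-$c$ expansions $\arcsin x \approx x$: the numerator behaves like $c^2/2$ while the denominator is dominated by $c/Z$, so the left side tends to $cZ/(2\pi)$, and the claim reduces to $6 \le 2\pi$, which holds with a small positive margin; monotonicity and convexity bounds on $\arcsin$ would then cover the remaining compact part of the region.
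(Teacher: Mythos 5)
Your proposal is correct and takes essentially the same approach as the paper: the paper likewise observes that under $\hat{v}_i = A v_i$ the reweighted random-edge GE is a quadratic in $A$ (the $f_1$ term scaling as $A^2$, the $f_3$ term as $A$, and $f_2$ unchanged), minimizes over $A$ to get the best scaling reweighting, and then verifies the comparison with $\hat{\epsilon}_{k_n}^{DPP\:Node}(f) = (M-k_n)(v^*)^2/6$ numerically over the region $Z \geq 4$, $0 \leq c \leq \tfrac{1}{Z}$ (Figure \ref{fig:DPP-np-minus-rand-edge}B). Your explicit reduction to a single arcsine inequality and the analytic check of the $c \to 0$ limit, where the margin is exactly $2\pi - 6 > 0$, modestly sharpen the paper's purely plot-based verification but do not change the underlying argument.
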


\begin{proof}[Proof of Theorem \ref{reweighted-random-edge}]
From Theorem \ref{dpp-np-error} we know that the GE after rewighting the DPP node pruned network is
\begin{equation}\label{dpp-np-rwt}
   \frac{(v^*)^2}{6} \left(M-k_n\right) = \frac{M (v^*)^2}{6}  \left(1-Zc\right)
\end{equation}
where $c$ satisfies \eqref{node-edge-relation}. Now for the given reweighting scheme in the hypothesis the GE for random edge pruning will be, 
\begin{equation}\label{rwt-rand-edge}
\frac{M(v^*)^2}{\pi}\left[ A^2 \left(\frac{1}{Z} \arcsin \frac{c}{1+c} +\left(1-\frac{1}{Z}\right) \arcsin \frac{c^2 }{1+c} \right) + \frac{\pi}{6} - 2 A \arcsin \frac{c }{\sqrt{2(1+c)}} \right] 
\end{equation}
\eqref{rwt-rand-edge} can be viewed as a quadratic equation of $A$ whose minimum correspond to the best reweighting scheme in the scaling family. In Figure \ref{fig:DPP-np-minus-rand-edge} B we compare this minimum with \eqref{dpp-np-rwt}. Formally we plotted $\hat{\epsilon}_{k_n}^{DPP\:Node}\left(f \right)  - \hat{\epsilon}_{c}^{Rand\:Edge}\left(\mathbb{E}\left [f\right] \right).$ It can be seen that this value is $-ve$ for all $0 \leq c \leq \frac{1}{Z}$, which implies GE of reweighted DPP node pruned network is always lower than reweighted random edge pruned network. 
\end{proof}

\section{Proof of Lemmas}

\begin{lemma}\label{np-error}
Assume (A1)-(A7). Let $t_1, \ldots, t_{M}$ denote the teacher hidden nodes and $l_1, \ldots, l_{M}$ denote the number of student hidden nodes in a node pruned network which learnt the corresponding teacher node. If $\sum_{m=1}^M l_m \leq M,$ then the GE of this node pruned network is,
$$\frac{(v^*)^2}{6}\left[ \sum_{m=1}^{M}\left(1-\frac{l_m}{Z} \right)^2\right]. $$
\end{lemma}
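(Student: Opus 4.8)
The plan is to evaluate the generalization error directly from the closed form $\epsilon_g = f_1(Q) + f_2(T) - f_3(R,Q,T)$ of \eqref{ge-breakdown}, by substituting the order parameters of the \emph{node-pruned} network at the converged specialized fixed point guaranteed by the ansatz (A6).

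First I would record the fixed-point values of the order parameters for the unpruned student at convergence. By (A6) and Lemma \ref{expected-node} the matrix $Q$ is block diagonal with $M$ blocks of size $Z$, one block per teacher node it specializes in; after normalization $Q_{ii} = T_{nn} = 1$, within a block $Q_{ik} = 1$, across blocks $Q_{ik} = 0$, the teacher nodes are orthonormal so $T_{nm} = \delta_{nm}$, and $R_{in} = 1$ exactly when student $i$ learns teacher $n$ (and $0$ otherwise). Combining Lemma \ref{converge-second} with the ansatz $v_i = v_j$ within a group forces $v_i = v^*/Z$ for every student node, while each teacher weight equals $v^*$. As a sanity check, these values give $f_1 = f_2 = M(v^*)^2/6$ and $f_3 = M(v^*)^2/3$, hence $\epsilon_g = 0$, recovering the known fact that the unpruned student learns the teacher exactly.

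Next I would use that node pruning merely deletes student nodes and (without reweighting) leaves the surviving weights untouched, so the pruned network's order-parameter matrices are just the restrictions of the above to the surviving indices. If $l_m$ nodes survive in group $m$, then in $f_1$ only within-group pairs contribute, each through $\arcsin\frac{1}{\sqrt{2}\,\sqrt{2}} = \arcsin\frac12 = \frac{\pi}{6}$, giving $f_1 = \frac{1}{\pi}\sum_m l_m^2\big(\tfrac{v^*}{Z}\big)^2\frac{\pi}{6} = \frac{(v^*)^2}{6Z^2}\sum_m l_m^2$; the teacher-only term $f_2 = \frac{M(v^*)^2}{6}$ is unchanged; and each surviving student contributes a single nonzero cross term to $f_3$ via the teacher it learns, giving $f_3 = \frac{2}{\pi}\sum_m l_m\,\tfrac{v^*}{Z}\,v^*\,\frac{\pi}{6} = \frac{(v^*)^2}{3Z}\sum_m l_m$.

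Finally I would combine the three pieces and complete the square:
\begin{equation*}
\epsilon_g = \frac{(v^*)^2}{6}\left[ \frac{1}{Z^2}\sum_{m=1}^M l_m^2 - \frac{2}{Z}\sum_{m=1}^M l_m + M \right] = \frac{(v^*)^2}{6}\sum_{m=1}^M \left(1 - \frac{l_m}{Z}\right)^2 ,
\end{equation*}
which is the claimed expression. The main obstacle is not the arithmetic but pinning down the exact converged order-parameter values from the ansatz (S58)--(S60) of \cite{goldt2019dynamics} and verifying that pruning acts on them by mere restriction to the surviving indices; once that structure and the $\arcsin$ evaluations are in hand the result is elementary algebra. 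Note that the hypothesis $\sum_m l_m \le M$ is the regime in which the lemma is later applied (it is what forces $l_m \in \{0,1\}$ in the DPP case) rather than a constraint the computation requires, since the identity above holds for arbitrary $l_m$.
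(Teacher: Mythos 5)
Your proposal is correct and follows essentially the same route as the paper's own proof: substitute the converged, block-diagonal order parameters of the node-pruned network (with $v_i = v^*/Z$ from Lemma \ref{converge-second} and the within-group ansatz) into the decomposition $\epsilon_g = f_1(Q) + f_2(T) - f_3(R,Q,T)$ of \eqref{ge-breakdown}, evaluate each $\arcsin$ term as $\pi/6$, and complete the square. Your added observations — the unpruned sanity check and the remark that the hypothesis $\sum_m l_m \le M$ is not needed for the computation itself — are both accurate and consistent with the paper's argument.
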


\begin{proof}
Let $G_1,\ldots,G_M$ be the subsets of student nodes such that all student nodes in $G_m$ learn the $m^{th}$ teacher node. From the assumption we have, $|G_m| = Z$ for all $m$. After pruning, a subset $P_m \subseteq G_m$ is chosen, and $|P_m| = l_m$. Denote the order parameters of the pruned network as $Q',R',T'$. For node pruning we can see that

\[
Q'_{ik} =
\begin{cases}
     Q_{ik}  & \text{ if } \exists m \text{ s.t. } h_i \in P_m \text{ and } h_k \in P_m \\
    0 & \text{ otherwise } 
  \end{cases}
\] 
Also, for the unpruned network we have 
\[
Q_{ik} =
\begin{cases}
     1  & \text{ if } \exists m \text{ s.t. } h_i \in G_m \text{ and } h_k \in G_m \\
    0 & \text{ otherwise } 
  \end{cases}
\] 
Now from \eqref{ge-breakdown} we can break down the GE into three parts. From \eqref{f1}, \eqref{f2} and \eqref{f3} we have,.  
\begin{align}
    f_1(Q') & = \frac{1}{\pi} \sum_{i,k} v_i v_k \arcsin \frac{Q'_{ik}}{\sqrt{1+Q'_{ii}} \sqrt{1+Q'_{kk}}},\nonumber\\
     & = \frac{1}{\pi} \sum_{n=1}^{M} \sum_{i,k \in P_n} v_i v_k \arcsin \frac{1}{2},\label{arcsinineq}\\
    & = \frac{1}{\pi} \sum_{n=1}^{M} \sum_{i,k \in P_n} v_i v_k \frac{\pi}{6},\nonumber\\
    & =  \frac{1}{6} \sum_{n=1}^{M} \left(\sum_{i \in P_{n}} v_i\right)^2,\nonumber\\
    & = \frac{(v^*)^2}{6} \sum_{n=1}^{M} \left(\frac{l_i}{Z}\right)^2\label{f1_expr}
\end{align}
\eqref{arcsinineq} follows from the fact that $h_i$ and $h_k$ belong to the same group $G_n$. So we have, $$\frac{Q'_{ik}}{\sqrt{1+Q'_{ii}} \sqrt{1+Q'_{kk}}} = \frac{1}{\sqrt{2}\sqrt{2}} = \frac{1}{2}$$
We can also see that \eqref{f1_expr} follows from Lemma \ref{converge-second} and the ansatz  $v_i = v_j$ when $i,j \in G_n$. The order parameters $T_{nm}$ doesn't change after pruning, and so we have,
\begin{align}
    f_2(T') & = \frac{1}{\pi}  \sum_{n,m} v_n^* v_m^* \arcsin \frac{T_{nm}}{\sqrt{1+T_{nn}} \sqrt{1+T_{mm}}},\nonumber\\
    & = \frac{1}{6} \sum_{n=1}^{M} (v^*)^2\label{f2_expr}
\end{align}
And similarly,
\begin{align}
    f_3(R',Q',T') & =  \frac{2}{\pi}  \sum_{i,n} v_i v_n^* \arcsin \frac{R'_{in}}{\sqrt{1+Q'_{ii}} \sqrt{1+T'_{nn}}},\nonumber\\
    & = \frac{2}{\pi} \sum_{n=1}^{M} v_n ^* \sum_{i \in P_n} v_i \arcsin \frac{1}{2},\nonumber\\
    & = \frac{2}{6} \sum_{n=1}^{M} v_n ^* \sum_{i \in P_n} v_i.\label{f3_expr}
\end{align}
Then from \eqref{f1_expr},\eqref{f2_expr} and \eqref{f3_expr} the GE of node pruning is,
\begin{equation}
\frac{(v^*)^2}{6}\left[ \sum_{m=1}^{M}\left(1-\frac{l_m}{Z} \right)^2\right].
\end{equation}
Hence we have the lemma.
\end{proof}

Intuitively, this lemma states that for teacher hidden node $t_n$ if  $l_n$ student hidden nodes survive after node pruning, then the fraction of information lost due to the deletion of  nodes is $1-\frac{l_n}{Z}$, where $Z$ is the number of student nodes learn a particular teacher node in the unpruned network.

\begin{lemma}\label{converge-second}
Let $v^*$ denotes the weight of the second layer of the teacher network and $\{v_1,\cdots,v_K\}$ be the weights of the student network after convergence. Then in the noiseless case for all $n$ we have, 
$$v^* = \sum_{i\in G_n} v_i$$ 
\end{lemma}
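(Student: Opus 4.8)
The plan is to leverage the fact that in the noiseless regime ($\sigma = 0$) a converged, sufficiently expressive student reproduces the teacher function exactly, and then to read off the second-layer relation by matching coefficients block by block. Concretely, \cite{goldt2019dynamics} guarantees that at convergence the generalization error vanishes, $\epsilon(f) = \tfrac12\langle[\phi(x,f)-\phi(x,f^*)]^2\rangle = 0$, so that $\phi(x,f) = \phi(x,f^*)$ for almost every Gaussian input $x$.

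Next I would translate the specialization ansatz (A6) into a statement about the first-layer weight vectors. Under that ansatz the order parameters take their specialized values: for a student node $i\in G_n$ one has $Q_{ii}=1$, $T_{nn}=1$ and $R_{in}=1$, for any two nodes $i,k$ in the same block $Q_{ik}=1$, while $Q_{ik}=0$ across distinct blocks. Since these are normalized overlaps of vectors of unit norm, the equalities $R_{in}=Q_{ii}=T_{nn}=1$ force $w_i = w^*_n$ in the thermodynamic limit, so every student node in $G_n$ carries exactly the same hidden feature $g(w^*_n x/\sqrt N)$ as teacher node $t_n$.

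Substituting this into the student output and grouping the $K=ZM$ hidden units by the block they belong to gives
\[
\phi(x,f) = \sum_{n=1}^M \Big(\sum_{i\in G_n} v_i\Big)\, g\!\left(\frac{w^*_n x}{\sqrt N}\right),
\]
whereas the teacher output is $\phi(x,f^*) = \sum_{n=1}^M v^* g(w^*_n x/\sqrt N)$. Equating the two functions and invoking the linear independence of the $M$ activations $x\mapsto g(w^*_n x/\sqrt N)$ — which holds because the teacher weight vectors $w^*_1,\dots,w^*_M$ are distinct (indeed asymptotically orthogonal, $T_{nm}=\delta_{nm}$) — lets me match the coefficient of each teacher feature, yielding $\sum_{i\in G_n} v_i = v^*$ for every $n$, as claimed.

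The main obstacle I anticipate is the coefficient-matching step: the vanishing of the generalization error only gives equality of the two outputs in $L^2$ under the Gaussian measure, so to pass from ``the functions agree'' to ``the block sums of $v_i$ equal $v^*$'' I must establish that the sigmoidal features $\{g(w^*_n x/\sqrt N)\}_{n=1}^M$ are linearly independent in that $L^2$ space; the orthogonality $T_{nm}=\delta_{nm}$ of the teacher weights in the thermodynamic limit is what I would use to guarantee this. A secondary technical point is justifying the identification $w_i=w^*_n$ from the order-parameter ansatz rather than merely from overlap closeness, which again rests on the $N\to\infty$ concentration of the order parameters already exploited throughout \cite{goldt2019dynamics}.
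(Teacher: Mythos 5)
Your proof is correct (at the same physics level of rigor as the paper) but takes a genuinely different route. The paper works directly with the SGD dynamics: it takes the equation of motion (S36) of \cite{goldt2019dynamics} for the second-layer weights, $\frac{dv_i}{dt} = \eta_v\left[\sum_{n} v_n^* I_2(i,n) - \sum_{j} v_j I_2(i,j)\right]$, evaluates the integrals under the specialization ansatz (each relevant term equals $\arcsin\frac{1}{2}$ within a block and $0$ across blocks), and reads off that the fixed points are exactly the configurations with $\sum_{i\in G_n} v_i = v^*$ for all $n$; ``after convergence'' then just means sitting at such a fixed point. You instead argue in function space: you invoke the result of \cite{goldt2019dynamics} that the noiseless over-parameterized student reaches zero GE, use the ansatz to force $w_i = w_n^*$ for $i \in G_n$ (which indeed follows since $\|w_i - w_n^*\|^2/N = Q_{ii} - 2R_{in} + T_{nn} = 0$), and then match coefficients via linear independence of the teacher features. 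Your linear-independence step is sound and can even be made quantitative: under $T_{nm} = \delta_{nm}$ the Gram matrix of the features is $\frac{1}{6}I$ by the arcsine formula, so the GE equals $\frac{1}{12}\sum_{n}\left(\sum_{i\in G_n} v_i - v^*\right)^2$, which vanishes iff every block sum equals $v^*$. The trade-off: your argument is conceptually cleaner, but it leans on the external convergence-to-zero-GE result, which, given the ansatz, is essentially equivalent to the statement being proved --- \cite{goldt2019dynamics} establish it precisely by the kind of fixed-point analysis the paper performs --- so the paper's ODE route is more self-contained; it also shows slightly more, namely that the block sums are attracted to $v^*$ (since $\frac{d}{dt}\sum_{j\in G_n} v_j \propto v^* - \sum_{j\in G_n} v_j$), not merely stationary there.
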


\begin{proof}[Proof of Lemma \ref{converge-second}]
From $(S36)$ of \cite{goldt2019dynamics} we have, 
\begin{align*}
    \frac{dv_i}{dt} & = \eta_v \left[ \sum_{n=1}^{M} v_n ^* I_2(i,n) -\sum_{j=1}^{K} v_j I_2(i,j) \right] \\
    & = \eta_v \arcsin \frac{1}{2} \left[  v^*  -\sum_{j \in G_{n}} v_j  \right]
\end{align*}
Hence, a fixed point (in terms of $v_i$'s) of the ODE is,
$$\{(v_1,\ldots,v_K) | \sum_{i \in G_n} v_i = v^* ,\forall 1\leq n\leq M)\}$$
\end{proof}

Intuitively, this lemma states that the sum of the outgoing edges of the student hidden nodes which learn a particular teacher hidden node is approximately equal to the weight of the outgoing edge of that teacher hidden node.

\begin{figure*}[t]
    \centering
    \includegraphics[width=\textwidth]{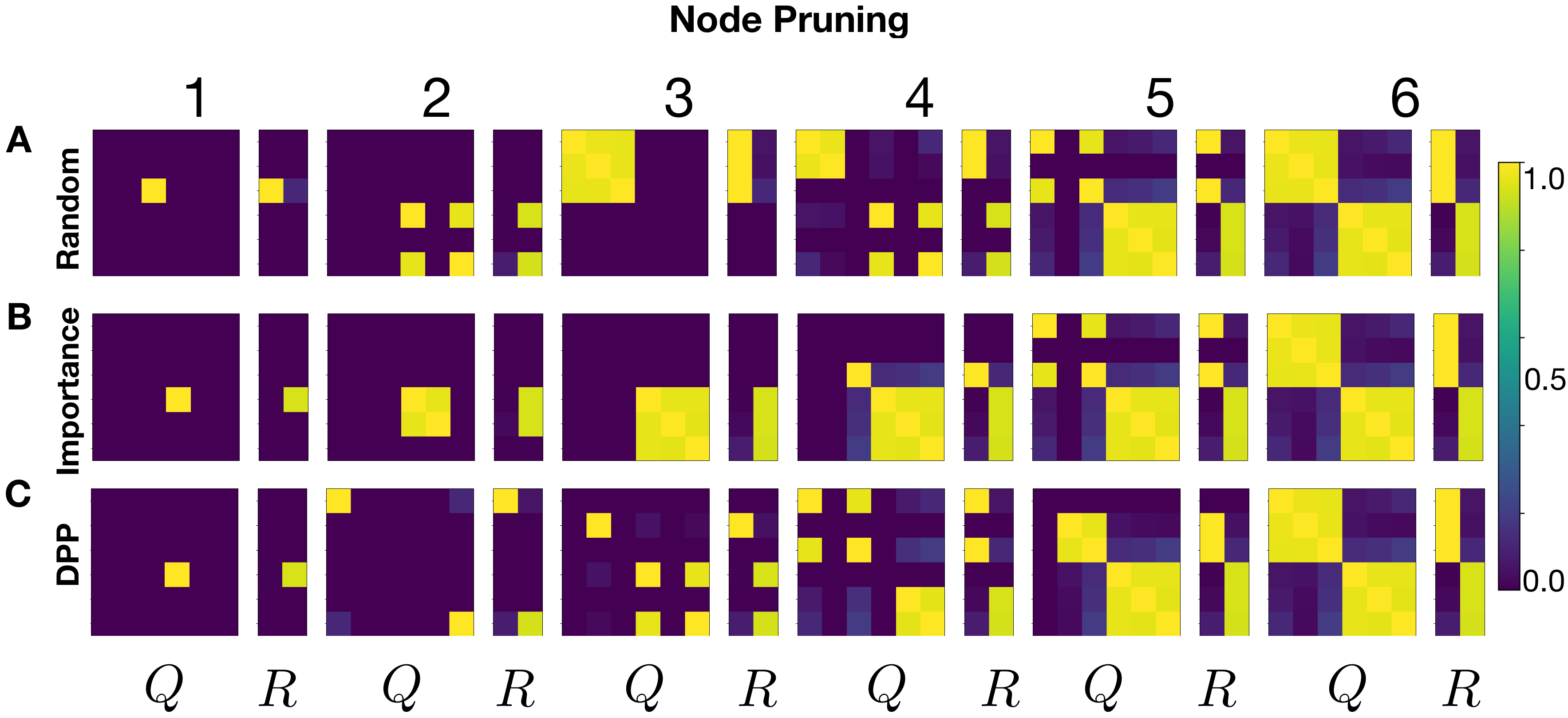}
    \caption{Order parameters after different node pruning methods in the teacher student setup. For this example, number of student hidden nodes $M=2$ and number of teacher hidden nodes $K=6$. From \cite{mariet2015diversity} we know that the first $3$ student nodes (call $1^{st}$ group) learn one teacher node, and the next $3$ (call $2^{nd}$ group) learn the second teacher node. Recall that $k_n$ is the number of student hidden nodes survived after node pruning. In this figure each row represents a particular node pruning method and each column (Q, R) shows results for different choices of $k_n$ (left to right goes from most pruned to unpruned network). \textbf{(A)} In case of random node pruning when $k_n=2$, two student node survives from the $2^{nd}$ group after pruning. As a result, information about the $1^{st}$ teacher node is completely lost in the pruned network. \textbf{(B)} Importance pruning keeps a student hidden node depending on its outgoing edge weights. The outgoing edge weights of each group is almost equally distributed among themselves, and they sum up to the second layer weight of corresponding teacher node (see Lemma \ref{converge-second}). As all the group size is equal (3 for this example), importance node pruning first selects node from the group whose corresponding second layer teacher weight is highest. In our example, it is the second group and hence for $k_n=1,2,3$, it selects node from the second group. Once a group is exhausted, it then selects from another group according to the aforementioned policy and so on. \textbf{(C)} For DPP node pruning when $k_n = 2$, two student hidden nodes are chosen from different groups which preserve information about both the teacher nodes. It can also be shown that, in case of node pruning, if at least one representative from a group survives after pruning, then the reweighting can recover the complete information about that block. Hence, in teacher student framework DPP node pruning performs the best among the node pruning methods especially after reweighting.}
    \label{fig:node-pruning-all}
\end{figure*}

\begin{figure*}[t]
    \centering
    \includegraphics[width=\textwidth]{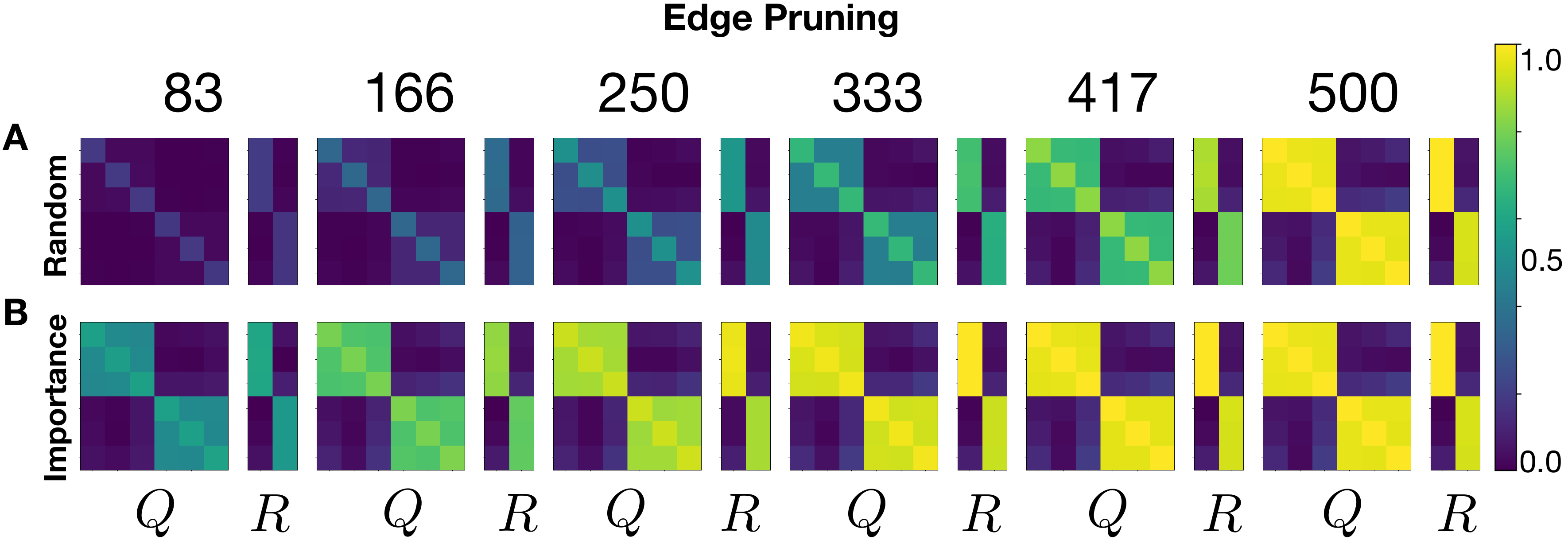}
    \caption{Order parameters after different edge pruning methods in the teacher student setup. For this example, number of student hidden nodes $M=2$ and number of teacher hidden nodes $K=6$. From \cite{mariet2015diversity} we know that the first $3$ student nodes (call $1^{st}$ group) learn one teacher node, and the next $3$ (call $2^{nd}$ group) learn the second teacher node. Recall that $k_e$ is the number of incoming edges for each student hidden nodes survived after edge pruning. In this figure each row represents a particular edge pruning method and each column (Q, R) shows results for different choices of $k_e$ (left to right goes from most pruned to unpruned network). \textbf{(A)} In case of random edge pruning, the expected order parameters have the form described in Lemma \ref{rand_edge-pruning}. \textbf{(B)} Order parameters for importance edge pruning. For importance edge pruning, the edges with lowest absolute values are removed. As the input dimension goes to infinity, the order parameters of the pruned network are close to that of the unpruned network ($k_e = 500$).  In particular, for any fix $k_e$, let $Q^{imp}_{k_e}$ be the order parameter of the pruned network when importance pruning is used. $Q^{rand}_{k_e}$ is defined similarly. Our simulations show that, $\norm{Q^{unpruned}-Q^{imp}_{k_e}} \leq \norm{Q^{unpruned}-Q^{rand}_{k_e}}$. This is why the blocks in the $Q$ matrix are the brightest in case of importance pruning. Hence, importance edge pruning performs the best without reweighting. }
    \label{fig:edge-pruning-all}
\end{figure*}

\begin{lemma}\label{rand_edge-pruning}
Let $Q,R,T$ are the order parameters of the unpruned network, and $Q',R',T'$ are the respective order parameters after applying the Random Edge pruning where $c$ fraction of the edges are kept. Then we have the following:
\begin{itemize}
\item \[
\mathbb{E}[Q'_{ik}] =
\begin{cases}
     c Q_{ik}  & \text{ if } i = k \\
    c^2 Q_{ik} & \text{ otherwise } 
\end{cases}
\]
    \item $\mathbb{E}[R'_{st}] = c R_{st}$ 
    \item $T'_{mn} = T_{mn} $
\end{itemize}
\end{lemma}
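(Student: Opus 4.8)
The plan is to model random edge pruning as an entrywise Bernoulli masking of the student weight vectors and then compute each order parameter of the masked network by linearity of expectation. Concretely, for each student hidden node $i$ and each incoming edge $j\in\{1,\ldots,N\}$, I would introduce an independent indicator $\xi_{ij}\in\{0,1\}$ with $\mathbb{E}[\xi_{ij}]=c$ (edge kept with probability $c$), so that the pruned weight vector $w'_i$ has coordinates $(w'_i)_j=\xi_{ij}(w_i)_j$. The teacher weights $w_n^*$ are never touched by the pruning, which immediately settles the last claim: since $T_{mn}=\tfrac{1}{N}(w_m^*)^T w_n^*$ depends only on teacher weights, $T'_{mn}=T_{mn}$ with no expectation needed.

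Next I would handle the mixed student--teacher term $R'$, which is the simplest. Writing $R'_{st}=\tfrac{1}{N}\sum_{j}\xi_{sj}(w_s)_j(w_t^*)_j$ and using $\mathbb{E}[\xi_{sj}]=c$ together with the fact that the mask is independent of the (fixed) weights gives $\mathbb{E}[R'_{st}]=\tfrac{c}{N}\sum_j (w_s)_j (w_t^*)_j = c R_{st}$, exactly as stated. Only the single factor $\xi_{sj}$ appears here, so this is a clean one-line computation.

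The main step is the student--student term $Q'_{ik}=\tfrac{1}{N}\sum_j \xi_{ij}(w_i)_j\,\xi_{kj}(w_k)_j$, where the diagonal and off-diagonal cases genuinely differ and must be separated. For $i\neq k$ the indicators $\xi_{ij}$ and $\xi_{kj}$ belong to distinct nodes and are therefore independent, so $\mathbb{E}[\xi_{ij}\xi_{kj}]=c^2$ and hence $\mathbb{E}[Q'_{ik}]=\tfrac{c^2}{N}\sum_j (w_i)_j(w_k)_j = c^2 Q_{ik}$. For the diagonal $i=k$, however, the \emph{same} indicator appears twice in the product, and because $\xi_{ij}$ is Bernoulli we have $\xi_{ij}^2=\xi_{ij}$, so that $\mathbb{E}[\xi_{ij}^2]=c$ rather than $c^2$; this yields $\mathbb{E}[Q'_{ii}]=\tfrac{c}{N}\sum_j (w_i)_j^2 = c Q_{ii}$.

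I expect the only conceptually delicate point to be precisely this asymmetry, namely $\mathbb{E}[\xi^2]=c$ on the diagonal versus $\mathbb{E}[\xi]\,\mathbb{E}[\xi]=c^2$ off the diagonal, which is what forces the two entries of $Q'$ to scale by different powers of $c$. Everything else reduces to linearity of expectation and the independence of distinct node masks, so no concentration or thermodynamic-limit argument is required for the expected-order-parameter statement itself (that concentration is only invoked later when these expectations are fed into Theorem \ref{random-edge-error}).
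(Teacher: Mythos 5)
Your proof is correct and takes essentially the same route as the paper: Bernoulli keep-indicators, linearity of expectation, independence of masks across distinct hidden nodes for the off-diagonal entries of $Q'$, and invariance of the teacher weights for $T$. One remark worth making: the "delicate point" you single out, namely $\mathbb{E}[\xi_{ij}^2]=c$ on the diagonal versus $\mathbb{E}[\xi_{ij}\xi_{kj}]=c^2$ off the diagonal, is exactly where the paper's own proof slips---it writes $\mathbb{E}[Q'_{ss}]=\tfrac{1}{N}\sum_i c^2\cdot w_{ss}^2=c^2 Q_{ss}$, which contradicts the statement of Lemma~\ref{rand_edge-pruning} itself and also the diagonal term $\arcsin\tfrac{c}{1+c}$ used in Theorem~\ref{random-edge-error}; your value $\mathbb{E}[Q'_{ii}]=cQ_{ii}$, obtained from the observation $\xi^2=\xi$ for a Bernoulli variable, is the correct one and is the version the rest of the paper actually relies on.
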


\begin{proof}
In case of Random Edge pruning each edge is kept with probability $c$. Then we have,
$$ \mathbb{E}[Q'_{st}]  =\frac{1}{N} \sum_{i=1}^N c  \cdot w_{is} \times c \cdot w_{it}  =    c^2 \frac{1}{N} \sum_{i=1}^N   w_{is} w_{it}  = c^2 Q_{st} $$
and
$$\mathbb{E}[Q'_{ss}] =\frac{1}{N} \sum_{i=1}^N c^2  \cdot w_{ss}^2= c^2 Q_{ss}.$$
Similarly,
$$\mathbb{E}[R'_{st}] =\frac{1}{N} \sum_{i=1}^N c \cdot w_{is} w^*_{it}  = c R_{st}
$$
The teacher node is not affected by the pruning. So $T$ is not modified by the pruning process. This proves the lemma.
\end{proof}

Intuitively, this lemma states that the order parameters of the pruned network using random edge pruning is a scaled version of the order parameters of the unpruned networks. However, the scaling of diagonal elements are different from that of off-diagonal elements (for more see Figure \ref{fig:edge-pruning-all} A).

\section{Simulation Details}\label{simulation_details}

In total, $10$ rounds of simulations are run for each of the 5 pruning methods, and we report the average and standard deviations (as error bars). The standard deviations are negligible (in the magnitude of $10^{-3}$). A \textit{round} is the entire process of generating a new teacher network with datasets, training the student from scratch, performing pruning and finally testing with the pruned network. For DPP and random methods, we sampled $100$ masks per round and reported the average performance in each round. Given $M = 2$ and  $K = 6$, we tried pruning with $[1, 2, 3, 4, 5]$ nodes (and the equivalent number of edges) left in the student, respectively. We keep the total number of weights same to compare different pruning methods. The node-to-edge ratio, given $N = 500$, $K = 6$, and $M = 2$, is [$1:83, 2:166, 3:250, 4:333, 5:417$]. This is calculated, for the teacher-student setup (single output node) specifically, as $k_e = \frac{k_n(1 + N) - K}{K}$. We grid-searched $\eta$ in the range of [$0.1, 0.2, 0.3, 0.4, 0.5, 0.6, 0.7$] and found $0.50$ to be the optimal. We used $\beta = 0.3$ for all DPP node kernel calculations in all simulations.

\section{Hyperparameters for Real Datasets}\label{hyperparam_real_data}
Besides the hyperparameters and setup we proposed in Section~\ref{Simulation} on the synthetic dataset, we report the hyperparameters used for the results on the~\texttt{MNIST} and~\texttt{CIFAR10}. As stated in Section~\ref{real-data-experiments}, we used that exact same experiment setup (network architectures, error thresholds, etc.) as in~\cite{mariet2015diversity} for fair and consistent comparisons. We used SGD optimizers, a learning rate of $0.001$, and a momentum of $0.9$ for traning on both datasets. For ~\texttt{MNIST}, the training batch size was $1000$. For ~\texttt{CIFAR10}, the training batch size was $128$.  All pruning methods were performed $10$ times, and we report the means and standard deviations in Figure~\ref{fig:data-results} (with reweighting). 

The node-to-edge ratio for pruning, which keeps the number of parameters in the pruned network the same, is $[397:614, 472:921, 548:1228, 623:1536, 699:1843, 774:2150, 849:2457, 925:2764]$ for \texttt{CIFAR10} and $[256:156, 287:235, 317:313, 348:392, 378:470, 409:548, 439:627, 470:705]$ for \texttt{MNIST}, given the network architecture in Table 1 of~\cite{mariet2015diversity}. These ratios correspond to $20\%$ to $90\%$ of the edges left for each node, as shown on the x-axis of Figure~\ref{fig:data-results}. These node-to-edge ratios are calculated based on the conversion equation in Section~\ref{real-data-experiments}. We used $\beta = 10 / |T|$ where $|T|$ is the size of the training dataset for all DPP node and edge kernel calculations on real data, following the choice of~\cite{mariet2015diversity}.

\begin{table*}[t]
\caption{The mean square GE on synthetic data for all pruning methods. The left-most row indicates the percentage of parameters left in the network. For specific node-to-edge ratio, see~\ref{node-edge-relation}. The upper table shows the noiseless case, and the lower shows the noisy case ($\sigma = 0.25$). 
We also observed the implicit regularization effects of pruning proposed by ~\cite{mariet2015diversity}}
\label{synthetic-loss-table}
\vskip 0.15in
\begin{center}
\begin{small}
\begin{sc}
\scalebox{1}{
\begin{tabular}{|c|c|c|c|c|c|c|}
\hline
\% of Parameters &  DPP Node & Rand. Edge & Rand. Node & Imp. Edge & Imp. Node\\
\hline
$17.0\%$    & 3.737$\pm$ 0.009& 3.451$\pm$ 0.011 & 3.978 $\pm$ 0.016 & 1.911 & 3.760 \\
$33.0\%$   &  2.310$\pm$ 0.012& 2.300$\pm$ 0.015 & 2.800 $\pm$ 0.035 & 0.814 & 2.719 \\
$50.0\%$   &  1.438$\pm$ 0.015& 1.402$\pm$ 0.006 & 1.748 $\pm$ 0.036 & 0.311 & 1.540 \\
$67.0\%$   &  0.740$\pm$ 0.017& 0.730$\pm$ 0.006 & 1.046 $\pm$ 0.018 & 0.110 & 0.721 \\
$83.0\%$   &  0.258$\pm$ 0.008& 0.204$\pm$ 0.005 & 0.540 $\pm$ 0.010 & 0.040 & 0.360 \\
\hline
\multicolumn{6}{c}{Original Test Loss: $0.051$ (Noiseless)} \\
\hline
$17.0\%$    &  4.000$\pm$ 0.005& 3.769$\pm$ 0.012 & 4.188 $\pm$ 0.001 & 1.963 & 4.167 \\
$33.0\%$   &  2.622$\pm$ 0.015& 2.558$\pm$ 0.011 & 3.041 $\pm$ 0.024 & 0.905 & 2.910 \\
$50.0\%$   &  1.633$\pm$ 0.002& 1.675$\pm$ 0.010 & 2.023 $\pm$ 0.035 & 0.450 & 2.031 \\
$67.0\%$   &  0.890$\pm$ 0.018& 1.007$\pm$ 0.007 & 1.269 $\pm$ 0.022 & 0.271 & 1.144 \\
$83.0\%$   &  0.394$\pm$ 0.001& 0.490$\pm$ 0.003 & 0.643 $\pm$ 0.002 & 0.253 & 0.659 \\
\hline
\multicolumn{6}{c}{Original Test Loss: $0.241$ ($\sigma = 0.25$)} \\
\end{tabular}}
\end{sc}
\end{small}
\end{center}
\vskip -0.1in
\end{table*}

\section{Tables and Figures}
Table~\ref{synthetic-loss-table} shows the experimental results on the synthetic data with the setup discussed in main text. For all the node-to-edge ratios in~\eqref{node-edge-relation}, given $K = 6$ and $M = 2$, we calculated the mean square GEs for both the noiseless and noisy case ($\sigma = 0.25$). We sampled $100$ masks per simulation, and there are in total 10 rounds of simulations. As mentioned earlier, DPP methods are stable, and the standard deviations are in the magnitude of $10^{-3}$ for all ratios.


\end{document}